\documentclass{article} 
\usepackage{iclr2026_conference,times}

\usepackage{amsfonts}       
\usepackage{amsmath}  
\usepackage{amssymb}  
\usepackage{amsthm}  
\usepackage{booktabs}       
\usepackage{enumitem}  
\usepackage{graphicx}  

\usepackage[svgnames]{xcolor}
\usepackage[
    colorlinks=true,
    urlcolor=blue,
    linkcolor=Navy,
    citecolor=Navy,
    pagebackref=true
]{hyperref}
\usepackage{url}
\usepackage{tikz}

\usepackage[nice]{nicefrac} 
\usepackage{microtype}      
\usepackage{mathtools}

\usepackage{mathrsfs}
\usepackage{subcaption,algorithm,algorithmic} 

\usepackage{wrapfig}

\usepackage[capitalize,nameinlink]{cleveref}
\crefname{assumption}{Assumption}{Assumptions}
\usepackage{parskip}

\newcommand{\N}{\mathbb{N}}  
\newcommand{\R}{\mathbb{R}}  

\DeclareMathOperator*{\argmax}{arg\,max}

\DeclareMathOperator{\E}{\mathbb{E}}  

\DeclarePairedDelimiterX{\infdivx}[2]{(}{)}{%
  #1\;\delimsize\|\;#2%
}

\newcommand{\Hcal}{\mathcal{H}}

\newcommand{\Pcal}{\mathcal{P}}

\usepackage{marvosym} 

\renewcommand{\phi}{\varphi}
\renewcommand{\epsilon}{\varepsilon}

\newtheorem{assumption}{Assumption}

\newtheorem{theorem}{Theorem}
\newtheorem{definition}{Definition}

\newtheorem{example}{Example}

\newtheorem{remark}{Remark}

\newcommand{\cS}{\mathcal{S}} 
\newcommand{\cA}{\mathcal{A}}

\newcommand{\cP}{\mathcal{P}}

\newcommand{\cD}{\mathcal{D}}
\newcommand{\cO}{\mathcal{O}}
\newcommand{\cX}{\mathcal{X}}

\newcommand{\cH}{\mathcal{H}}

\newcommand{\cT}{\mathcal{T}}

\newcommand{\Qtot}[1]{Q_{\mathrm{tot}}^{#1}}

\colorlet{lightgray}{gray!40}

\allowdisplaybreaks

\newcommand{\ba}{\mathbf{a}}
\newcommand{\bh}{\mathbf{h}}
\newcommand{\qrob}{Q^{\mathrm{rob}}}

\newcommand{\ltd}{L_{\mathrm{TD}}}
\newcommand{\ytar}{y^{\mathrm{target}}}
\newcommand{\Vtot}{V_{\mathrm{tot}}}
\newcommand{\Pworst}{{P^\mathrm{worst}}}

\title{Distributionally Robust Cooperative Multi-Agent Reinforcement Learning via Robust Value Factorization}

\author{Chengrui Qu\thanks{Department of Computing Mathematical Sciences, California Institute of Technology, CA 91125, USA}\\
Caltech
\And
Christopher Yeh\footnotemark[1]\\
Caltech
\And
Kishan Panaganti\thanks{Tencent AI Lab in Bellevue, WA, USA} \\
Tencent AI Lab
\AND
Eric Mazumdar\footnotemark[1]\\
Caltech
\And
Adam Wierman\footnotemark[1]\\
Caltech
}

\iclrfinalcopy 
\begin{document}

\maketitle

\begin{abstract}
Cooperative multi-agent reinforcement learning (MARL) commonly adopts centralized training with decentralized execution, where value-factorization methods enforce the individual-global-maximum (IGM) principle so that decentralized greedy actions recover the team-optimal joint action. However, the reliability of this recipe in real-world settings remains unreliable due to environmental uncertainties arising from the sim-to-real gap, model mismatch, and system noise. We address this gap by introducing \textbf{D}istributionally \textbf{r}obust \textbf{IGM} (DrIGM), a principle that requires each agent's robust greedy action to align with the robust team-optimal joint action. We show that DrIGM holds for a novel definition of robust individual action values, which is compatible with decentralized greedy execution and yields a provable robustness guarantee for the whole system. Building on this foundation, we derive DrIGM-compliant robust variants of existing value-factorization architectures (e.g., VDN/QMIX/QTRAN) that (i) train on robust Q-targets, (ii) preserve scalability, and (iii) integrate seamlessly with existing codebases without bespoke per-agent reward shaping. Empirically, on high-fidelity SustainGym simulators and a StarCraft game environment, our methods consistently improve out-of-distribution performance. Code and data are available at \url{https://github.com/crqu/robust-coMARL}.
\end{abstract}

\section{Introduction}
Multi-agent reinforcement learning (MARL) is a popular framework for studying how multiple agents compete or cooperate in complex environments such as video game playing \citep{vinyals_grandmaster_2019}, economic policy design \citep{zheng_ai_2022}, wireless network communications \citep{qu_scalable_2020}, and power grid control \citep{gao_consensus_2021}, among others. In this work, we focus on the \emph{cooperative} MARL setting, where each agent can only observe its local history, and agents must collaborate to achieve a joint goal. To address partial observability and reduce real-time communication costs, a widely used paradigm is centralized training with decentralized execution (CTDE) \citep{oliehoek2008}. During training, agents may aggregate global information, coordinate credit assignment, and learn a team structure; at deployment, each agent must act myopically based on its own local history.

The CTDE paradigm is typically realized through value factorization methods (e.g., VDN \citep{sunehag2017value}, QMIX \citep{rashid2020monotonic}, QTRAN \citep{son2019qtran}). A key concept that underpins the success of these methods is the individual-global-maximum (IGM) principle \citep{son2019qtran}, which aligns each agent's greedy action with the team-optimal joint action via a suitable value factorization. However, most examples where the success of this principle is demonstrated are in virtual tasks (e.g., games \citep{vinyals2017starcraftiinewchallenge} and grid worlds \citep{leibo2017}). It remains unclear whether this principle maintains its reliability in real-world domains, where modeling is imperfect and execution is noisy.

In practice, a major obstacle facing cooperative MARL is environmental uncertainty \citep{shi2024sample}: team performance can drop sharply when the deployed environment deviates from the training environment due to model mismatch, system noise, and the sim-to-real gap \citep{zhang2020,balaji2019deepracereducationalautonomousracing}. While environmental uncertainty presents challenges in single-agent RL settings, it is a more significant hurdle in cooperative MARL, where partial observability and inter-agent coupling can cause small mismatches to cascade into coordination failures \citep{capitan2012decentralized,he2022reinforcement}.

In single-agent RL, uncertainty in the environment is commonly addressed by distributionally robust RL (DR-RL) techniques \citep{wiesemann2013robust,taori2020measuring,nilim2005robust,panaganti2020robust,shi2023curious} which seek policies that perform well under adversarial perturbations of a nominal environment model. Single-agent DR-RL is well-explored, however extending DR-RL to the cooperative MARL setting is fundamentally more challenging.  In particular, each agent acts on a local history yet shares a team reward coupled with teammates' actions, making it nontrivial to define individual robust Q-functions that both evaluate worst-case outcomes and remain compatible with IGM for decentralized greedy execution. Reward engineering can help empirically, but only a narrow class of shaping functions can provably preserve optimality \citep{Foerster2016}, even in the single-agent setting. Thus, \textit{we seek a principled route to distributional robustness for cooperative MARL that remains compatible with decentralized greedy execution}.

\textbf{Contributions.}
In this paper, we introduce a family of distributionally robust cooperative MARL algorithms for the CTDE setting. Our central technique is \textbf{D}istributionally \textbf{r}obust \textbf{IGM} (DrIGM), a robustness principle that requires each agent’s robust greedy action to coincide with the robust team-optimal joint action, thereby preserving decentralized greedy execution.

We first show, via a concrete counterexample, that naïvely adopting individual robust action-value functions from single-agent DR-RL, where each agent considers its own worst case, does \emph{not} guarantee decentralized alignment (IGM) in the cooperative multi-agent setting. We then provide a sufficient condition under which DrIGM holds: when individual robust action-value functions are defined with respect to the worst-case joint action-value function, DrIGM is guaranteed.

Next, we derive DrIGM-compliant robust variants of existing value factorization architectures (VDN, QMIX, and QTRAN), by training on robust Q-targets while retaining the CTDE information structure. The resulting methods are scalable, easy to implement on top of existing codebases, and maintain robustness at execution without requiring bespoke individual robust value design.

Finally, we evaluate our DrIGM-based algorithms on a realistic simulation an HVAC control task in SustainGym \citep{yeh2023sustaingym}, as well as on SMAC, a StarCraft II-based multi-agent game-playing environment \citep{smac}. Across out-of-distribution settings, our methods outperform non-robust value factorization baselines and a recent robust cooperative MARL baseline, consistently mitigating sim-to-real degradation on operational metrics.

\paragraph{Brief discussion of related work.}
Robustness in cooperative MARL has been studied along several axes: adversarial or heterogeneous teammates \citep{li2019robust,kannan2023smart,li2024byzantine}, state/observation and communication perturbations \citep{Guo_2024,yu2024robust}, risk-sensitive (tail-aware) objectives under a fixed model \citep{shen2023riskq}, and explicit model uncertainty \citep{kwak2010,zhang2020,zhang2021robust,liu2025distributionally}. Most of the works on model uncertainty adopt a distributionally robust optimization viewpoint and targets Nash solutions with provable algorithms, often assuming full observability or individual rewards \citep{zhang2020robust,kardecs2011discounted,ma2023decentralized,blanchet2023double,shi2024sample,liu2025distributionally}. In this work, we focus on the cooperative CTDE regime with partial observability and a single team reward, providing a systematic framework for robustness to model uncertainty without real-time communication. Due to space constraints, we provide an extended discussion of related works in \Cref{sec:related}.
\section{Background and Problem Formulation}

\paragraph{Notation.} For a set $\cX$, $|\cX|$ denotes its \emph{cardinality} and $\Delta(\cX)$ the probability simplex over $\cX$. We write $\prod_{i} \cX_i$ for the Cartesian product. For $N \in \N$, we let $[N] := \{1, \dotsc, N\}$. Let $[x]_+ := \max\{0,x\}$.

\paragraph{Cooperative Dec-POMDPs.}
A cooperative multi-agent task with $N$ agents is modeled as a Decentralized Partially Observable Markov Decision Process (Dec-POMDP):
\[
    G = (\cS,\{\cA_i\}_{i=1}^N,P,r,\{\cO_i\}_{i=1}^N,\{\sigma_i\}_{i=1}^N,\gamma),
\]
with joint action space $\cA := \prod_{i \in [N]} \cA_i$. At time $t$, each agent $i$ obtains an individual observation $o_i^t := \sigma_i(s^t)$ from its observation space $\cO_i$, chooses an action $a_i^t\in\cA_i$, a bounded joint reward $r(s^t,\ba^t)$ is received, where $\ba^t := (a_1^t,\dotsc,a_N^t)\in \cA$ is the joint action, and then the state evolves via $s^{t+1}\sim P(\cdot\mid s^t,\ba^t)$. Here we assume the joint observation $(o_1,\dotsc,o_N)$ can recover the full state.\footnote{Formally, we assume that $\sigma = (\sigma_1(\cdot), \dotsc, \sigma_N(\cdot)): \cS \to \prod_{i \in [N]} \cO_i$ is injective. Equivalently, there exists a (deterministic) decoding map $g: \prod_{i \in [N]} \cO_i\to\cS$ such that $g(\sigma(s))=s$ for all $s\in\cS$.} Each agent $i$ acts using a history-based policy $\pi_i(\cdot\mid h_i^t)$ with $h_i^t := (o_i^0,a_i^0,\dotsc,o_i^{t-1},a_i^{t-1},o_i^{t})$; the joint policy is $\pi=\langle \pi_1,\dotsc,\pi_N\rangle$. We use $\cH_i^t$ to denote the space of possible histories for agent $i$ up to time $t$. In the following sections, we will omit the superscript $t$ to avoid notational clutter. The joint action-observation history is denoted $\bh \in \cH := \prod_{i \in [N]} \cH_i$. Given the current state $s$, the joint history $\bh$ and the joint action $\ba$, we denote the joint action-value function under policy $\pi$ by $\Qtot{P,\pi}(\bh,\ba)$, which can be reduced to $|\cS\times\cA|$ dimension as we assume the joint observation can recover the full state. We use $\Qtot{P}(\bh,\ba) \coloneqq \max_\pi \Qtot{P,\pi}(\bh,\ba)$ to denote the optimal joint action value.

\paragraph{CTDE.}
Centralized training with decentralized execution (CTDE) leverages global information during learning while executing individual policies from individual histories. A common CTDE approach is value factorization, which learns an optimal joint action-value $\Qtot{P}(\bh,\ba)$ and individual action-value functions $[Q_i^P(h_i,a_i)]_{i \in [N]}$ that satisfy the following \emph{individual-global-max} (IGM) principle~\citep{son2019qtran}.

\begin{definition}[IGM]
\label{def:igm}
We say that individual action-value functions $[Q_i^P: \cH_i \times \cA_i \to \R]_{i \in [N]}$ satisfy the \emph{individual-global-max (IGM) principle} for an optimal joint action-value function $\Qtot{P}: \cH \times \cA \to \R$ under joint history $\bh = (h_1, \dotsc, h_N) \in \cH$ if
\[
    \left( \argmax_{a_1} Q_1^P(h_1, a_1), \dotsc, \argmax_{a_N} Q_N^P(h_N, a_N) \right)
    \subseteq \argmax_{\ba} \Qtot{P}(\bh, \ba).
\]
\end{definition}
IGM ensures that greedy individual actions are jointly optimal w.r.t. $\Qtot{P}$, enabling decentralized execution without test-time communication.

\paragraph{Robust Dec-POMDPs.}
To capture model error and deployment shift, we consider an \emph{uncertainty set} $\cP$ of environment models around a nominal $P^0$. In Dec-POMDPs, we work with a history-based view: let $P_{\bh,\ba}(\cdot)$ denote the transition kernel over next joint histories $\bh'$, given the current joint history $\bh$ and the joint action $\ba$. We assume a \emph{history-action rectangular} uncertainty set.
\begin{equation}
\label{eq:decpomdp-rect}
    \cP = \prod_{(\bh,\ba)\in \cH\times \cA} \cP_{\bh,\ba},
    \quad
    \cP_{\bh,\ba} \subseteq \Delta(\cH),
\end{equation}
e.g., balls around $P^0_{\bh,\ba}$ under a probability metric with radius $\rho>0$. This rectangularity assumption is widely adopted in DR-RL literature \citep{blanchet2023double,shi2024sample,ma2023decentralized}, and ensures that a robust policy exists.

Given a function $Q: \cH \times \cA \to \R$, define the \emph{robust Bellman operator} $\cT$ as
\begin{equation}
\label{eq:robust-bellman-decpomdp}
(\cT Q)(\bh,\ba)
:= r(s,\ba)
    +\gamma\inf_{P_{\bh,\ba}\in \cP_{\bh,\ba}}
    \E_{\bh'\sim P_{\bh,\ba}}\left[\max_{\ba'\in\cA} Q(\bh',\ba')\right].
\end{equation}
Under standard assumptions (bounded rewards, $\gamma\in(0,1)$) and rectangularity in \cref{eq:decpomdp-rect}, $\cT$ is a $\gamma$-contraction on the space of bounded $Q$, so it has a unique fixed point $\Qtot{\cP}$ \citep{iyengar2005robust} which satisfies
\begin{equation}
\label{eq:optimal-robust-joint-Q}
    \Qtot{\cP}(\bh,\ba) = \inf_{P\in\cP}\Qtot{P}(\bh,\ba),
    \qquad\forall(\bh,\ba)\in\cH\times\cA.
\end{equation}
We call $\Qtot{\cP}$ the \emph{optimal robust joint action-value function} for the Dec-POMDP, and it admits a deterministic robust greedy joint policy $\pi^*(\bh)\in \argmax_{\ba} \Qtot{\cP}(\bh,\ba)$. 
\paragraph{Robust Cooperative MARL.}
We study robust cooperative MARL under the CTDE setting. Given a model uncertainty set $\cP$, our goal is to learn decentralized policies that maximize $\Qtot{\cP}$. That is, we aim to find $[\pi^\star_i:\cH_i\mapsto\cA_i]_{i\in[N]}$, such that
\[
    \langle\pi^\star_1, \dotsc, \pi^\star_N\rangle \in \argmax_{\ba} \Qtot{\cP}(\cdot, \ba).
\]
Specifically, we seek a value factorization method that automatically generates robust individual action values, thereby enabling a decentralized policy. This is non-trivial for two reasons. First, no individual reward signals are available, so robust individual action values are ill-defined a priori. Second, directly defining robust individual action values from the single-agent DR-RL literature can break standard value factorization. As we demonstrate in \cref{example:mdp}, robust individual actions may not align with the robust joint action. These challenges motivate the central question of our work:
\emph{Can we construct robust individual utilities and a mixing scheme such that decentralized greedy actions recover the joint maximizers of $\Qtot{\cP}$, thereby enabling a robust CTDE framework?}
\section{Distributionally Robust IGM (DrIGM)}
\label{sec:drigm}

To address the question above, we propose a novel principle for robust value factorization that builds upon the IGM principle while explicitly incorporating robustness.

\subsection{Distributionally Robust IGM (DrIGM) Principle}

\begin{definition}[DrIGM]
\label{def:drigm}
Given an uncertainty set $\cP$, we say that \textbf{robust} individual action-value functions $[\qrob_i: \cH_i \times \cA_i \to \R]_{i \in [N]}$ satisfy the \emph{\textbf{D}istributionally \textbf{r}obust \textbf{IGM} (DrIGM) principle} for the optimal \textbf{robust} joint action-value function $\Qtot{\cP}: \cH \times \cA \to \R$ under joint history $\bh = (h_1, \dotsc, h_N) \in \cH$ if
\[
    \left( \argmax_{a_1} \qrob_{1}(h_1, a_1), \dotsc, \argmax_{a_N} \qrob_{N}(h_N, a_N) \right)
    \subseteq \argmax_{\ba} \Qtot{\cP}(\bh, \ba).
\]
\end{definition}
\nameref{def:drigm} extends classical \nameref{def:igm} to the robust setting by requiring that the \emph{robust} joint greedy action induced by $\Qtot{\cP}$ factorizes into robust individual greedy actions from $[\qrob_i]_{i \in [N]}$. Note that when $\cP = \{P\}$ is a singleton (i.e., there is no uncertainty), then DrIGM is equivalent  to IGM.

Satisfying \nameref{def:drigm} is nontrivial. In particular, the single-agent definition $\qrob_i(s,a) = \inf_{P\in\cP} Q_i^P(s,a)$, commonly adopted in the DR-RL literature, does not ensure \nameref{def:drigm} when applied with a global uncertainty set. As shown in \Cref{example:mdp} in \cref{appendix:example}, an adversarial model $P \in \cP$ that minimizes one agent's value need not coincide with the adversarial model $P' \in \cP$ that minimizes the joint value. As a result, robust individual greedy actions may fail to align with the robust joint greedy action. Similar inconsistencies arise even under agent-wise uncertainty sets defined in \citet{shi2024sample} for essentially the same reason. This highlights the need for a new formulation of robust individual action values to support a consistent robust CTDE framework.

In robust cooperative MARL, the primary concern is the robustness of the \emph{entire system}, as opposed to robustness of individual agents. Thus, it is sufficient to consider the worst case for the joint action value, rather than independently for each agent. Motivated by this idea, we show that the robust individual action value defined under a global worst-case model can guarantee \nameref{def:drigm}.
\begin{theorem}
\label{theorem:igm-drigm}
Given a global uncertainty set $\cP$ defined in \cref{eq:decpomdp-rect}, suppose for all $P\in\cP$, there exist $[Q^P_i]_{i \in [N]}$ satisfying \nameref{def:igm} for $\Qtot{P}$ under joint history $\bh = (h_1, \dotsc, h_N) \in \cH$. Let 
\begin{align}
    P^{\mathrm{worst}}(\bh,\ba)&\in \arg\inf_{P\in\cP}\Qtot{P}(\bh,\ba), \label{eq:centralized-robust-dynamics}\\ \bar{\ba}&\in\argmax_{\ba}\Qtot{\cP}(\bh,\ba),
\end{align}
denote the global worst-case model and the robust joint greedy action, respectively. For each agent $i \in [N]$, define the robust individual action-value functions $\qrob_i$ as
\begin{equation}
\label{eq:qrob}
    \qrob_i(h_i,a_i) := Q_i^{P^{\mathrm{worst}}(\bh,\bar{\ba})}(h_i,a_i).
\end{equation}
Then, $[\qrob_i]_{i \in [N]}$ satisfy \nameref{def:drigm} for $\Qtot{\cP}$ under joint history $\bh$.
\end{theorem}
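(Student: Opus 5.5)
The plan is to show that the worst-case model in \cref{eq:centralized-robust-dynamics} can be taken to be a \emph{single} kernel $P^\star\in\cP$ under which the non-robust optimal joint value equals the robust one at every history--action pair:
\[
\Qtot{P^\star}(\bh',\ba')=\Qtot{\cP}(\bh',\ba')\quad\text{for all }(\bh',\ba').
\]
Given this, the theorem follows directly from \nameref{def:igm} applied to $P^\star$. The hypothesis supplies $[Q_i^{P^\star}]_{i\in[N]}$ satisfying IGM for $\Qtot{P^\star}$. Hence any tuple of individual greedy actions $\hat\ba=(\hat a_1,\dotsc,\hat a_N)$ with $\hat a_i\in\argmax_{a_i}\qrob_i(h_i,a_i)=\argmax_{a_i}Q_i^{P^\star}(h_i,a_i)$ lies in $\argmax_{\ba}\Qtot{P^\star}(\bh,\ba)=\argmax_{\ba}\Qtot{\cP}(\bh,\ba)$. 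This is exactly \nameref{def:drigm}.

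First, construct $P^\star$ from the robust Bellman operator \cref{eq:robust-bellman-decpomdp}. Let $V^\cP(\bh'):=\max_{\ba'}\Qtot{\cP}(\bh',\ba')$. For each $(\bh,\ba)$, pick
\[
P^\star_{\bh,\ba}\in\arg\inf_{P_{\bh,\ba}\in\cP_{\bh,\ba}}\E_{\bh'\sim P_{\bh,\ba}}\bigl[V^\cP(\bh')\bigr].
\]
The statement already presumes infima are attained by writing $\arg\inf$; I would make this explicit via compactness of each $\cP_{\bh,\ba}$, e.g.\ closed metric balls in the simplex. By rectangularity \cref{eq:decpomdp-rect}, the product $P^\star=\prod_{(\bh,\ba)}P^\star_{\bh,\ba}$ belongs to $\cP$. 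Choosing minimizers independently per pair and gluing them is legitimate precisely because of this product structure.

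Second, observe that the fixed-point equation $\Qtot{\cP}=\cT\Qtot{\cP}$ then reads
\[
\Qtot{\cP}(\bh,\ba)=r(s,\ba)+\gamma\,\E_{\bh'\sim P^\star_{\bh,\ba}}\bigl[\max_{\ba'}\Qtot{\cP}(\bh',\ba')\bigr].
\]
This is the standard (non-robust) Bellman optimality equation for the fixed model $P^\star$. That operator is a $\gamma$-contraction with unique fixed point $\Qtot{P^\star}$, so $\Qtot{P^\star}=\Qtot{\cP}$ everywhere. Combined with \cref{eq:optimal-robust-joint-Q}, this shows $P^\star$ attains $\inf_{P\in\cP}\Qtot{P}(\bh,\ba)$ simultaneously for all $(\bh,\ba)$. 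In particular $P^\star$ is a valid choice of $P^{\mathrm{worst}}(\bh,\bar\ba)$, and the first paragraph applies.

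The main obstacle is the selection issue. A minimizer at the single pair $(\bh,\bar\ba)$ need not coincide with $\Qtot{\cP}$ at the other actions $\ba\neq\bar\ba$. The naive argument only yields the chain
\[
\Qtot{\cP}(\bh,\hat\ba)\le\Qtot{P^{\mathrm{worst}}}(\bh,\hat\ba),\qquad \Qtot{P^{\mathrm{worst}}}(\bh,\hat\ba)\ge\Qtot{P^{\mathrm{worst}}}(\bh,\bar\ba)=\Qtot{\cP}(\bh,\bar\ba),
\]
whose first inequality goes the wrong way and so does not establish that $\hat\ba$ is robustly optimal. I would therefore state explicitly that $P^{\mathrm{worst}}(\bh,\bar\ba)$ in \cref{eq:qrob} is selected as the uniform worst case constructed above, which rectangularity guarantees exists. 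For an arbitrary pointwise minimizer the claim could fail.
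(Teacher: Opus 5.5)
Your proof is correct, and it departs from the paper's at exactly the step you flag. The paper argues by a chain of inclusions. IGM for the fixed kernel $P' := P^{\mathrm{worst}}(\bh,\bar\ba)$ places the greedy tuple in $\argmax_{\ba}\Qtot{P'}(\bh,\ba)$. The paper then asserts $\argmax_{\ba}\Qtot{P'}(\bh,\ba)\subseteq\argmax_{\ba}\Qtot{P^{\mathrm{worst}}(\bh,\ba)}(\bh,\ba)=\argmax_{\ba}\Qtot{\cP}(\bh,\ba)$, citing only the definition of $P^{\mathrm{worst}}$. That step silently replaces the single kernel $P'$ by the action-dependent family $\ba\mapsto P^{\mathrm{worst}}(\bh,\ba)$, and it never uses the product structure of $\cP$.

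Your warning about arbitrary minimizers is confirmed by the paper's own Example 1. Take the rectangular hull of $\{P_1,P_2\}$, which leaves $\Qtot{\cP}(s_0,\cdot)=(0.7,0.4,0.6,0.7)$ unchanged. Every kernel gives $0.7$ at $\bar\ba=(1,1)$, so $P_1$ is a legitimate choice of $P^{\mathrm{worst}}(s_0,\bar\ba)$. Yet $\argmax_{\ba}\Qtot{P_1}(s_0,\ba)=\{(2,1)\}$, the VDN decomposition of $P_1$ listed there selects $(2,1)$, and its robust value is $0.6<0.7$. On the non-rectangular $\{P_1,P_2\}$ itself the paper's chain applies verbatim, yet the conclusion fails for both possible selections: $P_1$ yields $(2,1)$ and $P_2$ yields $(1,2)$.

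Your construction resolves this by fixing the selection. Gluing per-pair minimizers of $\E[\max_{\ba'}\Qtot{\cP}(\bh',\ba')]$ via rectangularity gives a single $P^\star\in\cP$. The robust fixed-point equation then becomes the ordinary Bellman optimality equation for $P^\star$. Hence $\Qtot{P^\star}=\Qtot{\cP}$ identically, and IGM transfers directly to DrIGM.

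Compared with the paper's argument, your route does three things:
\begin{itemize}
\item It uses rectangularity essentially: rectangularity is exactly what guarantees a simultaneous minimizer, and no such minimizer exists for $\{P_1,P_2\}$.
\item It requires IGM only for $P^\star$, consistent with the paper's first Remark.
\item It proves the theorem in the amended form you state, with $P^{\mathrm{worst}}(\bh,\bar\ba)$ chosen as a simultaneous minimizer rather than an arbitrary element of $\arg\inf_{P}\Qtot{P}(\bh,\bar\ba)$.
\end{itemize}
The paper's shorter chain is valid only under that same choice, in which case its disputed inclusion holds with equality.
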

The proof of \Cref{theorem:igm-drigm} can be found in \Cref{appendix:proof-igm-drigm}. \Cref{theorem:igm-drigm} demonstrates that by anchoring individual robust action values to the global worst-case model evaluated at the robust joint greedy action, individual robust greedy actions become aligned with the robust joint greedy action. This construction resolves the misalignment problem that occurs when individual adversaries differ from the global adversary. More broadly, the result highlights that robust CTDE is achieved not by independently robustifying each agent, but by coordinating all agents against a shared adversarial model tied to the team’s worst-case joint outcome. This perspective offers a principled foundation for designing robust value factorization methods that maintain decentralized execution while ensuring robustness guarantees.

\paragraph{Common factorization methods satisfy \nameref{def:drigm}.}Having established that robust individual action values can be consistently defined via \Cref{theorem:igm-drigm}, we now examine whether these values are compatible with standard factorization methods used in cooperative MARL.
\begin{theorem}
\label{theorem:factorizations}
Given $\cP$ defined in \cref{eq:decpomdp-rect}, for a joint history $\bh\in\cH$, suppose for all $P\in\cP$, there exist individual action-value functions $[Q_i^P]_{i \in [N]}$ satisfying one of the following conditions for all $\ba = (a_1, \dotsc, a_N) \in \cA$:
\begin{align}
    &\Qtot{P}(\bh, \ba)
        = \sum_{i\in[N]} Q_i^P(h_i, a_i),
        \tag{VDN} \\
    &\frac{\partial \Qtot{P}(\bh,\ba)}{\partial Q_i^P(h_i,a_i)} \geq 0,\
        \quad \forall i\in[N],
        \tag{QMIX} \\
    &\sum_{i \in [N]} Q_i^P(h_i,a_i) - \Qtot{P}(\bh,\ba) + V_{\mathrm{tot}}(\bh) =
        \begin{cases}
            0,    & \ba=\bar{\ba},\\
            \ge 0,& \ba\neq \bar{\ba},
        \end{cases}
        \tag{QTRAN}
\end{align}
where $\bar{\ba} := [\bar{a}_i]_{i\in[N]}$ with $\bar{a}_i := \argmax_{a_i}Q_i^{P}(h_i,a_i)$ and $V_\mathrm{tot}(\bh) := \max_{\ba}\Qtot{P}(\bh,\ba)-\sum_{i\in[N]} Q_i^{P}(h_i,a_i)$ with $\ba=[a_i]_{i\in[N]}$. Then $[\qrob_i]_{i \in [N]}$ as defined in \cref{eq:qrob} satisfy \nameref{def:drigm} for $\Qtot{\cP}$ under joint history $\bh$.
\end{theorem}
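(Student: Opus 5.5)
The plan is to reduce \Cref{theorem:factorizations} to \Cref{theorem:igm-drigm}. That theorem needs only one hypothesis: for every $P\in\cP$, the individual action values $[Q_i^P]_{i\in[N]}$ satisfy \nameref{def:igm} for $\Qtot{P}$ under $\bh$. So it suffices to check, one condition at a time, that each of (VDN), (QMIX), (QTRAN) implies IGM for an arbitrary fixed $P\in\cP$. Once that is done, \Cref{theorem:igm-drigm} applies directly with $\qrob_i = Q_i^{P^{\mathrm{worst}}(\bh,\bar{\ba})}$ from \cref{eq:qrob}, and DrIGM for $\Qtot{\cP}$ follows. The robust structure (rectangularity, choice of $P^{\mathrm{worst}}$) is used only through \Cref{theorem:igm-drigm}; the new work is per-model and purely deterministic.

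\textbf{VDN.} Fix $P$ and let $\bar a_i\in\argmax_{a_i}Q_i^P(h_i,a_i)$. For any $\ba$ we have
\[
\Qtot{P}(\bh,\ba)=\sum_i Q_i^P(h_i,a_i)\le\sum_i Q_i^P(h_i,\bar a_i)=\Qtot{P}(\bh,\bar\ba),
\]
because the sum separates across agents. Hence $\bar\ba$ is a joint maximizer.

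\textbf{QMIX.} Interpret the condition in the standard way: $\Qtot{P}(\bh,\ba)=f_{\bh}\bigl(Q_1^P(h_1,a_1),\dots,Q_N^P(h_N,a_N)\bigr)$ with $f_{\bh}$ nondecreasing in each argument, which is what the partial-derivative condition encodes. Replacing each $a_i$ by $\bar a_i$ increases each argument weakly. By coordinatewise monotonicity, $\Qtot{P}(\bh,\ba)\le\Qtot{P}(\bh,\bar\ba)$. To make this rigorous I would change one coordinate at a time, or integrate the nonnegative partials along the segment between the two argument vectors.

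\textbf{QTRAN.} Substitute $V_{\mathrm{tot}}(\bh)=\max_{\ba}\Qtot{P}(\bh,\ba)-\sum_iQ_i^P(h_i,\bar a_i)$ into the condition at $\ba=\bar\ba$. The equality case gives $\sum_iQ_i^P(h_i,\bar a_i)-\Qtot{P}(\bh,\bar\ba)+\max_{\ba}\Qtot{P}-\sum_iQ_i^P(h_i,\bar a_i)=0$, that is, $\Qtot{P}(\bh,\bar\ba)=\max_{\ba}\Qtot{P}(\bh,\ba)$. So the greedy tuple is a joint maximizer; this is the known sufficiency direction of QTRAN.

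\textbf{Main obstacle.} The main obstacle is bookkeeping rather than mathematics. In the QTRAN statement the notation $V_{\mathrm{tot}}$ is ambiguous: it should be read with $a_i=\bar a_i$. The $\argmax$ sets may also contain several elements, so IGM must hold for every choice of greedy components. I would handle ties by running each argument above for an arbitrary element of each individual $\argmax$, which yields the set inclusion in \nameref{def:igm}. In the QMIX case I would also state explicitly the monotone-mixer interpretation of the derivative condition.
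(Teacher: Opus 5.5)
Your proposal follows the paper's proof step for step, and as a derivation from \Cref{theorem:igm-drigm} it is valid: you verify \nameref{def:igm} for each fixed $P\in\cP$ by separability (VDN), coordinate-by-coordinate monotonicity (QMIX), and substituting $V_{\mathrm{tot}}$ at $\bar{\ba}$ (QTRAN), then invoke \Cref{theorem:igm-drigm}. Your tie-handling works verbatim for VDN and QMIX, but for QTRAN it requires the equality case to hold at every greedy tuple, which the statement's single-valued $\argmax$ tacitly assumes. Be aware that everything rests on \Cref{theorem:igm-drigm}: its proof's inclusion $\argmax_{\ba}\Qtot{\Pworst(\bh,\bar{\ba})}(\bh,\ba)\subseteq\argmax_{\ba}\Qtot{\cP}(\bh,\ba)$ holds if $\Pworst(\bh,\bar{\ba})$ attains $\inf_{P\in\cP}\Qtot{P}(\bh,\ba)$ for every $\ba$ at once (rectangularity provides such a kernel), but it can fail for a kernel that is worst only at $\bar{\ba}$, since that kernel may be optimistic after other joint actions and so have a non-robust greedy tuple.
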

The proof of \Cref{theorem:factorizations} can be found in \Cref{appendix:proof-of-factorization}. \Cref{theorem:factorizations} shows that when the underlying individual Q-functions satisfy the structural conditions of VDN \citep{sunehag2017value}, QMIX \citep{rashid2020monotonic}, or QTRAN \citep{son2019qtran}, the robust individual action values $[\qrob_i]_{i\in[N]}$ constructed from \cref{eq:qrob} automatically satisfy \nameref{def:drigm}. This result ensures that robust CTDE can be realized directly within widely used value factorization frameworks, enabling principled distributionally robust extensions of existing algorithms. Moreover, as long as the test environment lies within the prescribed uncertainty set, this approach yields a provable robustness guarantee, as formalized in the next theorem.
\begin{theorem}\label{theorem:out-of-sample-performance}
Given $\cP$ defined in \cref{eq:decpomdp-rect}, suppose the robust individual action values $\qrob_i$ satisfy \cref{def:drigm}. If the test environment model $P_{\mathrm{test}}$ is included in the uncertainty set (i.e., $P_{\mathrm{test}}\in \cP$), then the robust joint action values provably lower bound the real joint action values in $P_{\mathrm{test}}$:
\[
    \Qtot{\cP}(\bh,\ba)\le \Qtot{P_{\mathrm{test}}}(\bh,\ba),
    \quad\forall \bh\in\cH,\ \ba\in\cA.
\]
\end{theorem}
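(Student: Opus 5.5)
The inequality follows almost directly from the characterization of the optimal robust joint action value in \cref{eq:optimal-robust-joint-Q}. The DrIGM hypothesis on $[\qrob_i]_{i\in[N]}$ is not needed for the inequality itself; it only matters for interpreting the result. It guarantees that the decentralized greedy joint action $\bar{\ba}$ maximizes $\Qtot{\cP}(\bh,\cdot)$, so the bound applies in particular to the action the agents actually execute.

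\textbf{Step 1: invoke the fixed-point characterization.} Under the rectangularity assumption \cref{eq:decpomdp-rect}, the robust Bellman operator $\cT$ is a $\gamma$-contraction with unique fixed point $\Qtot{\cP}$, and this fixed point satisfies $\Qtot{\cP}(\bh,\ba)=\inf_{P\in\cP}\Qtot{P}(\bh,\ba)$ for every $(\bh,\ba)$.

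\textbf{Step 2: specialize to the test model.} Because $P_{\mathrm{test}}\in\cP$, an infimum over $\cP$ is at most its value at any member of $\cP$. Hence
\[
\Qtot{\cP}(\bh,\ba)=\inf_{P\in\cP}\Qtot{P}(\bh,\ba)\le \Qtot{P_{\mathrm{test}}}(\bh,\ba)
\]
for all $(\bh,\ba)$.

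\textbf{Step 3 (optional strengthening): compare operators directly.} The only real subtlety is whether the identity in \cref{eq:optimal-robust-joint-Q} is fully justified, since it requires rectangularity so that the per-step adversary can be stitched into a single global model. To avoid relying on it, I would argue through the operators instead. Let $\cT_{P_{\mathrm{test}}}$ be the non-robust optimal Bellman operator for $P_{\mathrm{test}}$; its fixed point is $\Qtot{P_{\mathrm{test}}}$.

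\begin{enumerate}
\item Since $P_{\mathrm{test},\bh,\ba}\in\cP_{\bh,\ba}$ for each $(\bh,\ba)$, the infimum in \cref{eq:robust-bellman-decpomdp} gives $\cT Q\le \cT_{P_{\mathrm{test}}}Q$ pointwise for every bounded $Q$.
\item Both operators are monotone.
\item Starting from $Q_0=\Qtot{\cP}$, which satisfies $\Qtot{\cP}=\cT\Qtot{\cP}\le\cT_{P_{\mathrm{test}}}\Qtot{\cP}$, induction with monotonicity yields $\Qtot{\cP}\le \cT_{P_{\mathrm{test}}}^k\Qtot{\cP}$ for all $k$.
\item Letting $k\to\infty$ and using the contraction property of $\cT_{P_{\mathrm{test}}}$, the iterates converge to $\Qtot{P_{\mathrm{test}}}$, which gives the bound.
\end{enumerate}

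This route needs only pointwise membership $P_{\mathrm{test},\bh,\ba}\in\cP_{\bh,\ba}$ for each $(\bh,\ba)$, which holds because $\cP$ is the product \cref{eq:decpomdp-rect}. It therefore sidesteps the attainment question in \cref{eq:centralized-robust-dynamics}.
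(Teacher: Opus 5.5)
Your Steps 1--2 are exactly the paper's proof: it recalls $\Qtot{\cP}(\bh,\ba)=\inf_{P\in\cP}\Qtot{P}(\bh,\ba)$ and bounds the infimum by its value at $P_{\mathrm{test}}\in\cP$, likewise without using the DrIGM hypothesis. Your optional Step 3 is also correct. It combines the pointwise domination $\cT Q\le \cT_{P_{\mathrm{test}}}Q$ with monotonicity and the contraction property, and it has the minor advantage of working directly from the fixed-point definition of $\Qtot{\cP}$ rather than relying on the cited identity \cref{eq:optimal-robust-joint-Q}.
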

The proof of \cref{theorem:out-of-sample-performance} can be found in \cref{appendix:proof-of-out-of-sample}.


\subsection{Robust Bellman Operators under Specific Uncertainty Sets}
To design training loss functions, we next present the \nameref{def:drigm}-based robust Bellman operators for two common uncertainty designs: $\rho$-contamination and total variation (TV), which are well-studied in the single-agent distributionally robust RL literature \citep{yang2021towards,panaganti2021sample,xu2023improved,dong2022online,liu2024distributionally,panaganti-rfqi,wang2022policy,zhang2024distributionally}. Both types of uncertainty sets consider perturbations of size $\rho \in (0,1]$ around a nominal model $P^0$.

The $\rho$-contamination uncertainty set is defined as (for all $\bh\in\cH$ and $\ba\in\cA$)
\begin{equation}\label{eq:contamination}
    \cP_{\bh,\ba} = \left\{ P \in \Delta(\cH) \,\middle|\, P_{\bh,\ba}= (1-\rho) P^{0}_{\bh,\ba}+\rho \nu_{\bh,\ba},\ \nu\in \Delta(\cH)\ \mathrm{is\  arbitrary} \right\}, 
\end{equation}
with corresponding robust Bellman operator
\begin{align}\label{eq:bellman-contamination}
    (\cT \Qtot{\cP})(\bh,\ba)&\stackrel{(a)}{=}r(s,\ba)+\gamma(1-\rho)\E_{\bh'\sim P^0_{\bh,\ba}}\!\left[\max_{\ba'\in\cA} 
    \Qtot{\cP}(h_1',\dotsc, h_N', \ba')\right] \nonumber\\
    &\stackrel{(b)}{=}r(s,\ba)+\gamma(1-\rho)\E_{\bh'\sim P^0_{\bh,\ba}}\!\left[
    \Qtot{\cP}(h_1',\dotsc, h_N', \bar{a}_1',\dotsc,\bar{a}_N')\right],
\end{align}
where $\bar{a}_i' = \argmax_{a_i'}\qrob_i(h_i',a_i')$. Here, (a) follows from robust Bellman operator as in the single-agent setting due to the $\bh\times \ba$-rectangularity from \cref{eq:decpomdp-rect}. (b) follows from the \nameref{def:drigm} principle where robust individual greedy actions are aligned with the robust joint greedy action.

Similarly, the TV-uncertainty set is defined as (for all $\bh\in\cH$ and $\ba\in\cA$)
\begin{equation}\label{eq:tv}
    \cP_{\bh,\ba} = \left\{ P \in \Delta(\cH) \,\middle|\, \mathrm{TV}(P, P^{0}_{\bh,\ba}) \leq \rho \right\},
\end{equation}
with corresponding robust Bellman operator,
\begin{align}\label{eq:bellman-tv}
    (\cT \Qtot{\cP})(\bh,\ba)&=r(s,\ba)-\inf_{\eta\in[0,\frac{2}{\rho(1-\gamma)}]}\gamma\E_{\bh'\sim P^0_{\bh,\ba}}\bigg(-(1-\rho)\eta(s,\ba) \nonumber\\
    &\qquad+\bigg[\eta(s,\ba)-
    \Qtot{\cP}(h_1',\dotsc, h_N', \bar{a}_1',\dotsc,\bar{a}_N')\bigg]_+\bigg),
\end{align}
where $\eta(s,a)$ is the dual variable.

The derivation of the robust Bellman operators relies on a ``fail state'' assumption (\cref{assumption:fail-state}); the proof is provided in \cref{appendix:robust-bellman}. In the next section, we show how \nameref{def:drigm} leads to practical robust value factorization algorithms.

\section{Algorithms: Robust Value Factorization}

\begin{figure}[tb]
    \centering
    \includegraphics[width=0.9\linewidth]{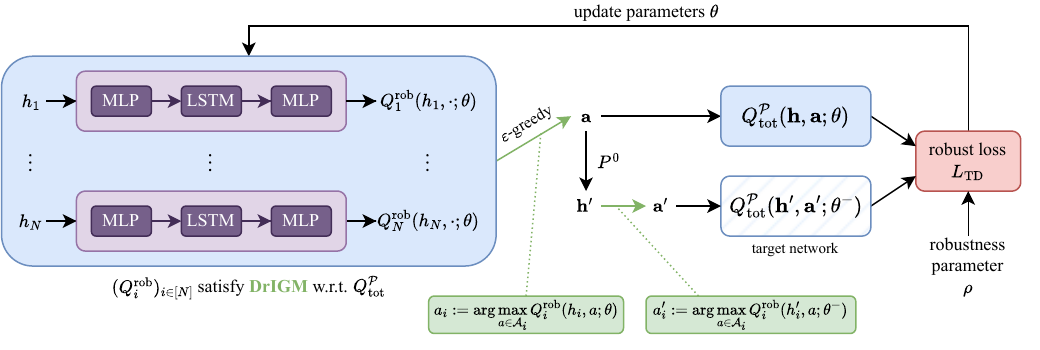}
    \caption{Overview of our robust value factorization algorithms. Because the robust individual action-value functions satisfy \nameref{def:drigm}, greedy actions can be computed efficiently in a decentralized manner while the function parameters are trained with a robust TD loss based on global reward.}
    \label{fig:overview}
\end{figure}

\paragraph{Overall framework.}
Guided by \nameref{def:drigm}, we develop six robust value factorization algorithms by combining two types of uncertainty sets ($\rho$-contamination, TV-uncertainty) with three different value factorization architectures (VDN, QMIX, QTRAN). The overall framework is illustrated in \cref{alg:general} and \cref{fig:overview}, while detailed pseudocode for each variant can be found in \cref{appendix:algorithms}. Concretely, we collect trajectories using $\varepsilon$-greedy exploration and train the robust individual action-value network using TD-learning \citep{sutton2018reinforcement}. For stability, robust one-step targets are evaluated using \emph{target} networks, which are updated periodically.

\begin{algorithm}[htbp]
\caption{Robust value factorization}
\label{alg:general}
\begin{algorithmic}[1]
    \STATE Input robustness parameter $\rho$, target network update frequency $f$, and $\varepsilon$
    \STATE Initialize replay buffer $\cD$
    \STATE Initialize \textbf{robust individual action-value networks} $[\qrob_i]_{i\in[N]}$ with random parameters $\theta$
    \STATE Initialize \textbf{factorization networks} that produce $\Qtot{\cP}$ with random parameters $\theta$
    \STATE Initialize target parameters $\theta^- = \theta$
    \FOR{episode $e=1,\dotsc,E$}
        \STATE Observe initial state $s^0$ and observation $o_i^0=\sigma_i(s^0)$ for each agent $i$.
        \FOR{$t=1,\dotsc,T$}
            \STATE Each agent $i$ chooses its action $a_i^t$ using $\varepsilon$-greedy policy.
            \STATE Take joint action $\ba^t$, observe the next state $s^{t+1}$, reward $r^t$ and observation $o_i^{t+1}=\sigma_i(s^{t+1})$ for each agent $i$
            \STATE Store transition $(\bh^t,\ba^t,r^t,\bh^{t+1})$ in replay buffer $\cD$
            \STATE Sample a mini-batch of transitions $(\bh,\ba,r,\bh')$ from $\cD$
            \STATE Calculate \textbf{TD loss} $\ltd$ using \cref{eq:td-general}
            \STATE Update $\theta$ by minimizing $\ltd$
            \STATE Update $\theta^-=\theta$ with frequency $f$
        \ENDFOR
    \ENDFOR
\end{algorithmic}    
\end{algorithm}

\paragraph{Robust individual action-value networks.}
Each agent \(i\) uses a DRQN (Deep Recurrent Q Network)-style network that maps its local history \(h_i\) (observations and past actions) to action-values \(Q_i(h_i,a_i)\), following an \emph{MLP} encoder \(\rightarrow\) \emph{LSTM} core \(\rightarrow\) \emph{MLP} output architecture. For our training procedure, we follow the approach from \citet{HausknechtS15}. We sample mini-batches of \emph{sub-trajectories} from the replay buffer $\cD$ and use \emph{bootstrapped random updates}. We use 8 burn-in steps to warm-start the LSTM state and only take the last step output to calculate the loss and update the networks. This procedure is computationally and memory efficient while achieving performance comparable to sequential updates from the start of each episode.

\paragraph{Factorization networks.}
We instantiate three networks for robust value factorization: 
\begin{enumerate}[left=1em]
\item \textbf{VDN} factorizes the robust joint action-value as the sum of robust per-agent values,
\[
Q_{\mathrm{tot}}^{\cP,\mathrm{VDN}}(\bh,\ba)\;=\;\sum_{i=1}^N \qrob_i(h_i,a_i).
\]

\item Beyond direct summation, \textbf{QMIX} uses a \emph{monotone} mixing network
\begin{equation} Q_{\mathrm{tot}}^{\cP,\mathrm{QMIX}}(\bh,\ba)
\;=\; f_\theta\!\big(\qrob_1(h_1,a_1),\dotsc,\qrob_N(h_N,a_N),\,s\big),
\end{equation}
where $s$ is the global state.
A lightweight \emph{hypernetwork} takes \(s\) as input and outputs the layer weights of \(f_\theta\); to ensure
\(\partial  \Qtot{\cP,\mathrm{QMIX}}/\partial \qrob_i\ge 0\) (the QMIX monotonicity constraint), we enforce elementwise nonnegativity on these weights via an absolute-value (or softplus) transform. Biases remain unconstrained.

\item \textbf{QTRAN} learns a separate joint action-value function $Q_{\mathrm{tot}}^{\cP,\mathrm{QTRAN}}(\bh,\ba)$ and a baseline $V_{\mathrm{tot}}(\bh)$. For efficiency and scalability, the joint network shares the encoder/head with the individual DRQN modules. In addition to the robust TD loss, QTRAN imposes two \emph{consistency} terms to align the factorized and joint values:
\begin{align}
L_{\mathrm{opt}} \;&=\;
\Big( Q_{\mathrm{tot}}^{\cP,\mathrm{VDN}}(\bh,\bar\ba)
-\widehat Q_{\mathrm{tot}}^{\cP,\mathrm{QTRAN}}(\bh,\bar\ba)
+V_{\mathrm{tot}}(\bh)
\Big)^2,\label{eq:lopt}\\
L_{\mathrm{nopt}} \;&=\;
\Big(
\min\!\big[ Q_{\mathrm{tot}}^{\cP,\mathrm{VDN}}(\bh,\ba)
-\widehat Q_{\mathrm{tot}}^{\cP,\mathrm{QTRAN}}(\bh,\ba)
+V_{\mathrm{tot}}(\bh),
\,0
\big]
\Big)^2,\label{eq:lnopt}
\end{align}
where the $\hat{Q}$ is the detached Q value for training stability.
Intuitively, \(L_{\mathrm{opt}}\) enforces equality at the (robust) greedy joint action, while \(L_{\mathrm{nopt}}\) penalizes positive slack elsewhere, recovering the QTRAN constraints in our robust setting.
\end{enumerate}

\paragraph{TD Loss.}
Given the robust Bellman operator $\cT$ defined in \cref{eq:robust-bellman-decpomdp} for a Dec-POMDP setting, the generic form of the TD-loss is
\begin{align}\label{eq:td-general}
    \ltd
    &= \left(
        \Qtot{\cP}(\bh,\ba;\theta) - (\cT \Qtot{\cP}(\cdot, \cdot; \theta^-))(\bh, \ba)
    \right)^2,
\end{align}
where $\theta$ is the network parameters, and $\theta^-$ is the target network parameters for training stability. Specifically, for $\rho$-contamination uncertainty sets, by the robust Bellman operator in \cref{eq:bellman-contamination}, we have
\begin{equation}
    \ltd = \left(\Qtot{\cP}(\bh,\ba;\theta)-(r(s,\ba)+\gamma(1-\rho)\E_{\bh'\sim P^0_{\bh,\ba}}\Qtot{\cP}(\bh',\bar{\ba}';\theta^-))\right)^2,
\end{equation} 
For TV uncertainty sets, by the robust Bellman operator in \cref{eq:bellman-tv}, we have
\begin{align}
    \ltd
    &=\bigg(\Qtot{\cP}(\bh,\ba;\theta)-r(s,\ba)+\gamma\E_{\bh'\sim P^0_{\bh,\ba}}\bigg[-(1-\rho)\eta(s,\ba) \nonumber\\
    &\qquad+[\eta(s,\ba)-
    \Qtot{\cP}(\bh',\bar{\ba}';\theta^-)]_+\bigg]\bigg)^2,
\end{align}
where $\eta:\cS\times\cA\to\R$ is calculated by minimizing the following empirical loss:
\begin{equation}\label{eq:dual}
    L_{\mathrm{dual}}(\eta,\Qtot{\cP})
    = \frac{1}{|\cD|} \sum_{(\bh,\ba,\bh') \in \cD} \left( \left[\eta(s,\ba)-\max_{\ba'}\Qtot{\cP}(\bh',\ba')\right]_+ - (1-\rho) \, \eta(s,\ba)
    \right).
\end{equation}

\section{Experiments}
\subsection{SustainGym}
We evaluate our proposed robust value factorization methods on SustainGym \citep{yeh2023sustaingym}, a recent benchmark suite designed to simulate real-world control tasks under distribution shift. We focus on multi-agent environments for smart building HVAC control, which inherently involve stochastic dynamics, distribution shifts, partial observability, and inter-agent coupling. These environments are particularly well-suited to test robustness, as the environmental models can vary across days and building locations (thus climate conditions). More details about the environment and the experiment setup are provided in \cref{appendix:experiments}. Our code can be found at \url{https://github.com/crqu/robust-coMARL}.

\paragraph{Evaluation protocol.}
To assess generalization under distribution shift (i.e., model uncertainty), we adopt the following protocol. In the \textbf{training phase}, each algorithm is trained on a single environment. For robust MARL baselines that require multiple environments, we follow their standard protocol and train them on a fixed set of environments. In the \textbf{evaluation phase}, trained policies are deployed on unseen configurations that differ from those used in training, simulating realistic deployment where distribution shifts inevitably arise. This design allows us to explicitly measure robustness to changes in environment dynamics rather than simple memorization of training conditions.

\paragraph{Baselines.}
We compare our robust value factorization methods against:
\begin{itemize}[left=1em,nosep]
    \item Non-robust factorization methods: VDN, QMIX, and QTRAN trained without robustness considerations, representing the standard CTDE paradigm.
    \item Existing robust CTDE baseline: the multi-agent group distributionally robust algorithm from \citet{liu2025distributionally}, which we refer to as ``GroupDR''. While the original work used only the VDN architecture, we extend the algorithm to QMIX and QTRAN for completeness.
\end{itemize}
\begin{figure}[htp]
    \centering
    \includegraphics[width=1\linewidth]{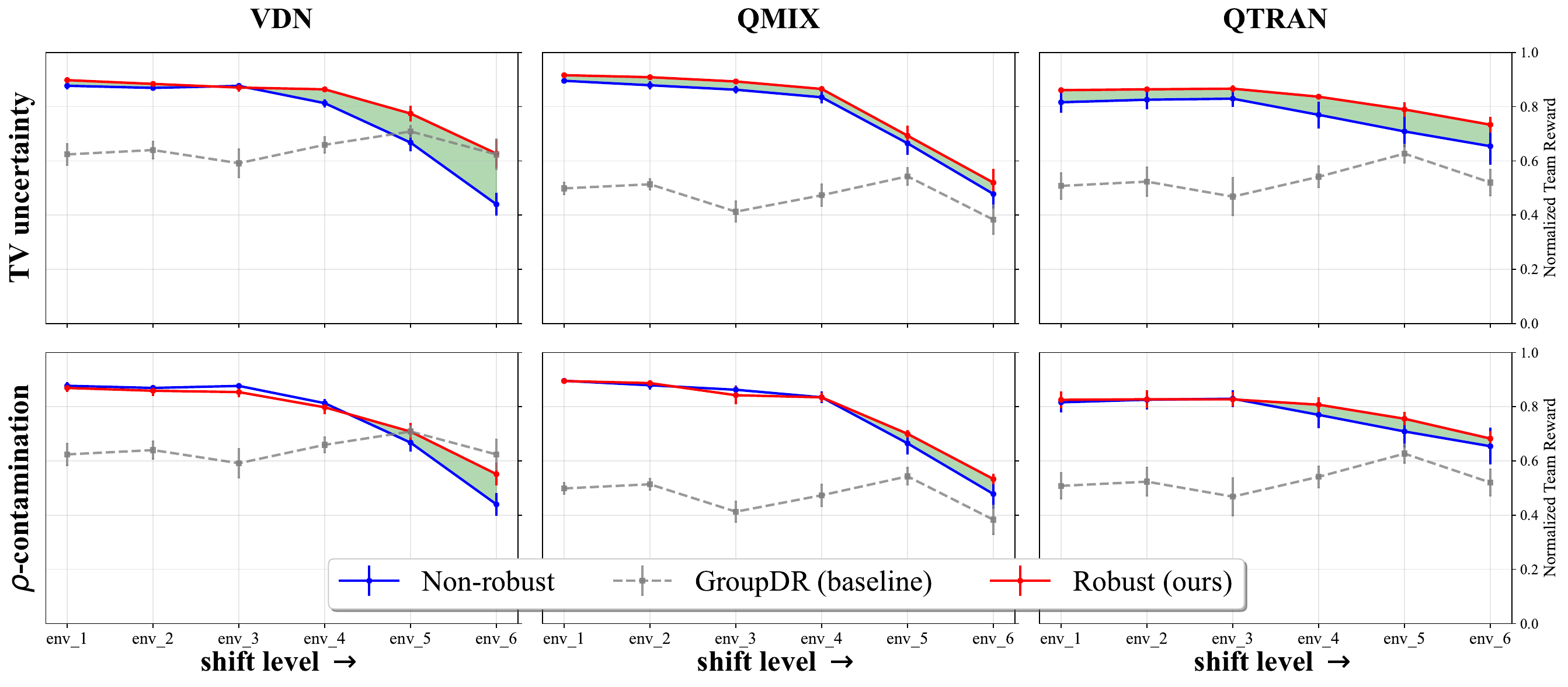}
    \caption{Normalized performance (averaged over 5 independent training runs, with error bars showing standard error) across different environment configurations for our robust MARL algorithms and other baselines. Each panel corresponds to one value factorization method. Robustness gain is the difference in reward (shaded area) between Robust (ours) and Non-robust, which shows the out-of-distribution performance improvement from the robust training.}
    \label{fig:robust_marl_results}
\end{figure}

\paragraph{Experiment 1: climatic shifts.}We first test robustness under shifts induced by changes in climate conditions. Results (averaged over five seeds) are shown in \cref{fig:robust_marl_results}. Our robust MARL algorithms consistently outperform both non-robust counterparts and the group DR baseline. Notably, performance degradation scales with the severity of the shift (e.g., \texttt{env\_6} deviates most from the training environment, \texttt{env\_1}), but our methods maintain relatively high returns. In contrast, the GroupDR baseline exhibits little sensitivity to shift severity, reflecting its reliance on worst-case rewards computed only from configurations encountered during training.

\paragraph{Experiment 2: seasonal shifts.}
We next evaluate robustness to seasonal shifts, training algorithms on \texttt{season\_1} data and evaluating on \texttt{season\_2}. Results are reported in \cref{tab:day-shift}, showing mean and standard error of normalized episodic returns. The results show that robust value factorization algorithms with TV uncertainty set achieve consistent robustness gain against seasonal shifts.

\begin{table}[htbp]
\centering
\resizebox{0.75\textwidth}{!}{%
\begin{tabular}{cccc}
\toprule
Factorization Methods
    & VDN & QMIX & QTRAN \\
\midrule
Non-robust
    & 0.877 $\pm$ 0.012
    & 0.895 $\pm$ 0.008
    & 0.816 $\pm$ 0.036 \\
baseline (GroupDR)
    & 0.624 $\pm$ 0.040
    & 0.499 $\pm$ 0.022
    & 0.508 $\pm$ 0.048 \\
Robust (TV-uncertainty)
    & \textbf{0.898} $\pm$ \textbf{0.008}
    & \textbf{0.916} $\pm$ \textbf{0.006}
    & \textbf{0.861} $\pm$ \textbf{0.006} \\
Robust ($\rho$-contamination)
    & 0.869 $\pm$ 0.013
    & \textbf{0.911} $\pm$ \textbf{0.005}
    & \textbf{0.825} $\pm$ \textbf{0.028} \\
\bottomrule
\end{tabular}%
}
\caption{Final Performances under seasonal shifts for our robust MARL algorithms and other baselines (mean $\pm$ standard error over 5 independent training runs). Values outperforming both the non-robust and group DR baselines are highlighted in bold. }
\label{tab:day-shift}
\end{table}

\paragraph{Experiment 3: climatic and seasonal shifts.}Finally, we test on the most extreme case, where we have distribution shifts arising from climatic and seasonal shifts. The results are presented in \cref{tab:location-day-shift}, with our robust MARL algorithms achieving 10-40\% higher average reward than the non-robust baseline. Notably, QTRAN-based robust MARL algorithms demonstrate strong out-of-distribution performance and stability.

\begin{table}[htbp]
\centering
\resizebox{0.75\textwidth}{!}{%
\begin{tabular}{cccc}
\toprule
Factorization Methods
    & VDN & QMIX & QTRAN 
    \\
\midrule
Non-robust
    & 0.440 $\pm$ 0.040
    & 0.478 $\pm$ 0.052
    & 0.654 $\pm$ 0.066 \\
baseline (GroupDR)
    & 0.624 $\pm$ 0.056
    & 0.383 $\pm$ 0.053
    & 0.520 $\pm$ 0.049 \\
Robust (TV-uncertainty)
    & \textbf{0.627} $\pm$ 0.049
    & \textbf{0.520} $\pm$ \textbf{0.048}
    & \textbf{0.733} $\pm$ \textbf{0.026} \\
Robust ($\rho$-contamination)
    & 0.551 $\pm$ 0.039
    & \textbf{0.500} $\pm$ 0.075
    & \textbf{0.682} $\pm$ \textbf{0.026} \\
\bottomrule
\end{tabular}%
}
\caption{Final Performances under climatic and seasonal shifts for our robust MARL algorithms and other baselines (mean $\pm$ standard error over 5 independent training runs). Values outperforming both the non-robust and group DR baselines are highlighted in bold.}
\label{tab:location-day-shift}
\end{table}

\paragraph{Choice of $\rho$.}
Theoretically, $\rho$ should be chosen based on prior estimation of the model uncertainty level. Practically, we select $\rho$ by training on \texttt{env\_1} and validating on \texttt{env\_2} and \texttt{env\_3}, which yields stable performance without overfitting to a single shift.

\paragraph{Robustness in cooperative MARL.}
A noteworthy finding is that robustness in cooperative MARL does \emph{not necessarily} entail reduced performance in the training environment. Unlike in single-agent robust RL, where conservatism often penalizes in-distribution returns, explicitly modeling robustness here mitigates errors from partial observability and decentralized execution. In several cases, robust training even improves in-distribution performance relative to non-robust baselines, suggesting that robustness can simultaneously enhance stability and adaptability in multi-agent systems.
\subsection{StarCraft II}
We additionally conduct experiments in SMAC \citep{smac}, a well-known benchmark consisting of two teams of agents engaged in cooperative combat scenarios based on StarCraft II. We focus on the hard \texttt{3s\_vs\_5z} map. In the test environment, we introduce distribution shift by adding noise to each agent's observation of every enemy unit's normalized position, sampled from $\mathcal{N}(0, 0.75^2)$. Since QTRAN has been shown to perform poorly on SMAC \citep{son2020qtranimprovedvaluetransformation}, we report results for the $\rho$-contamination uncertainty set using only VDN and QMIX (averaged over five seeds) in \cref{fig:smac}. The results demonstrate that for small values of $\rho$, our robust MARL algorithms significantly improve out-of-distribution performance.
\begin{figure}[ht]
    \centering
    \begin{subfigure}{0.49\textwidth}
        \centering
        \includegraphics[height=1.9in]{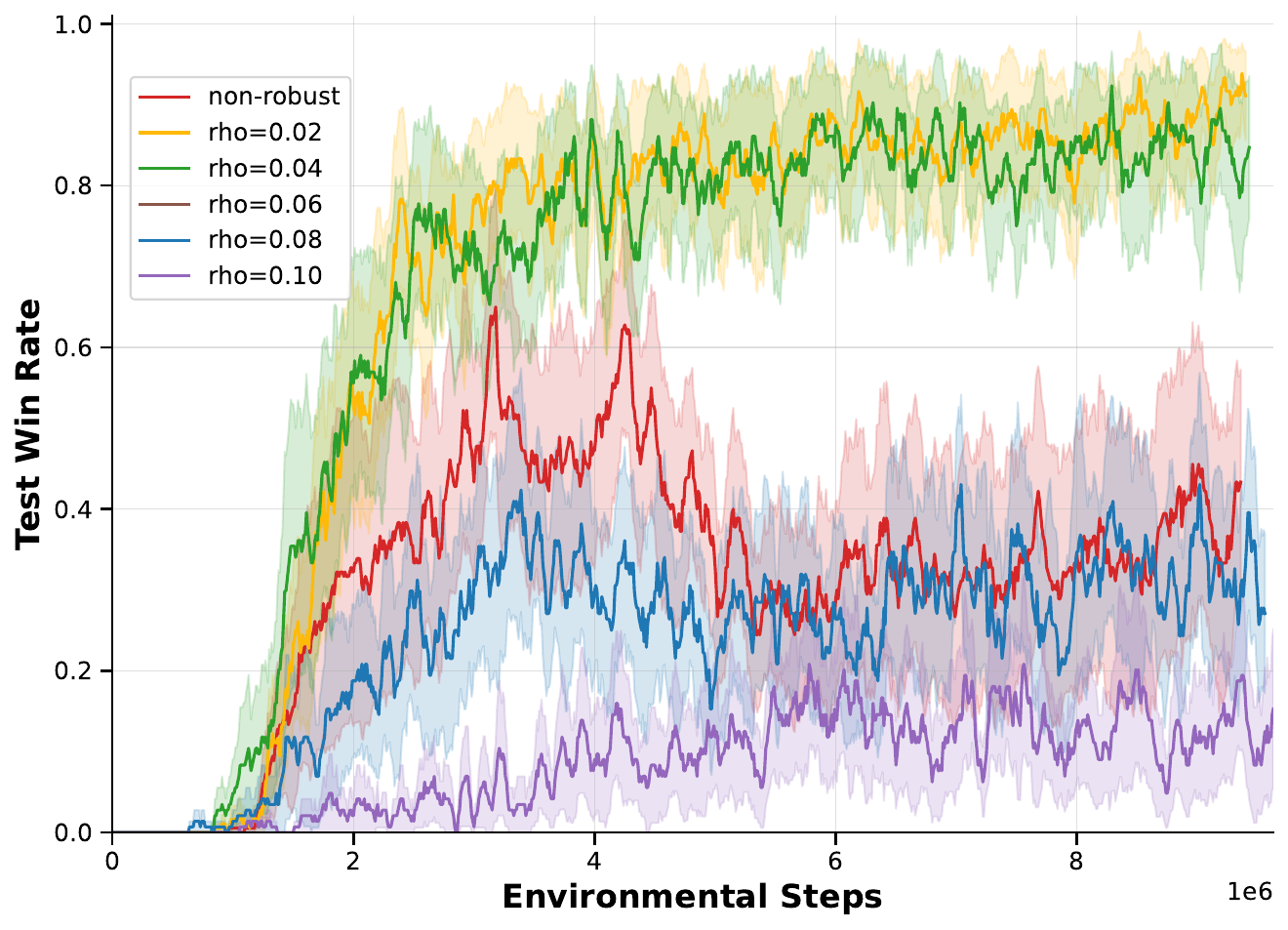}
        \caption{Robust VDN}
        \label{fig:smac-vdn}
    \end{subfigure}%
    \begin{subfigure}{0.49\textwidth}
        \centering
        \includegraphics[height=1.9in]{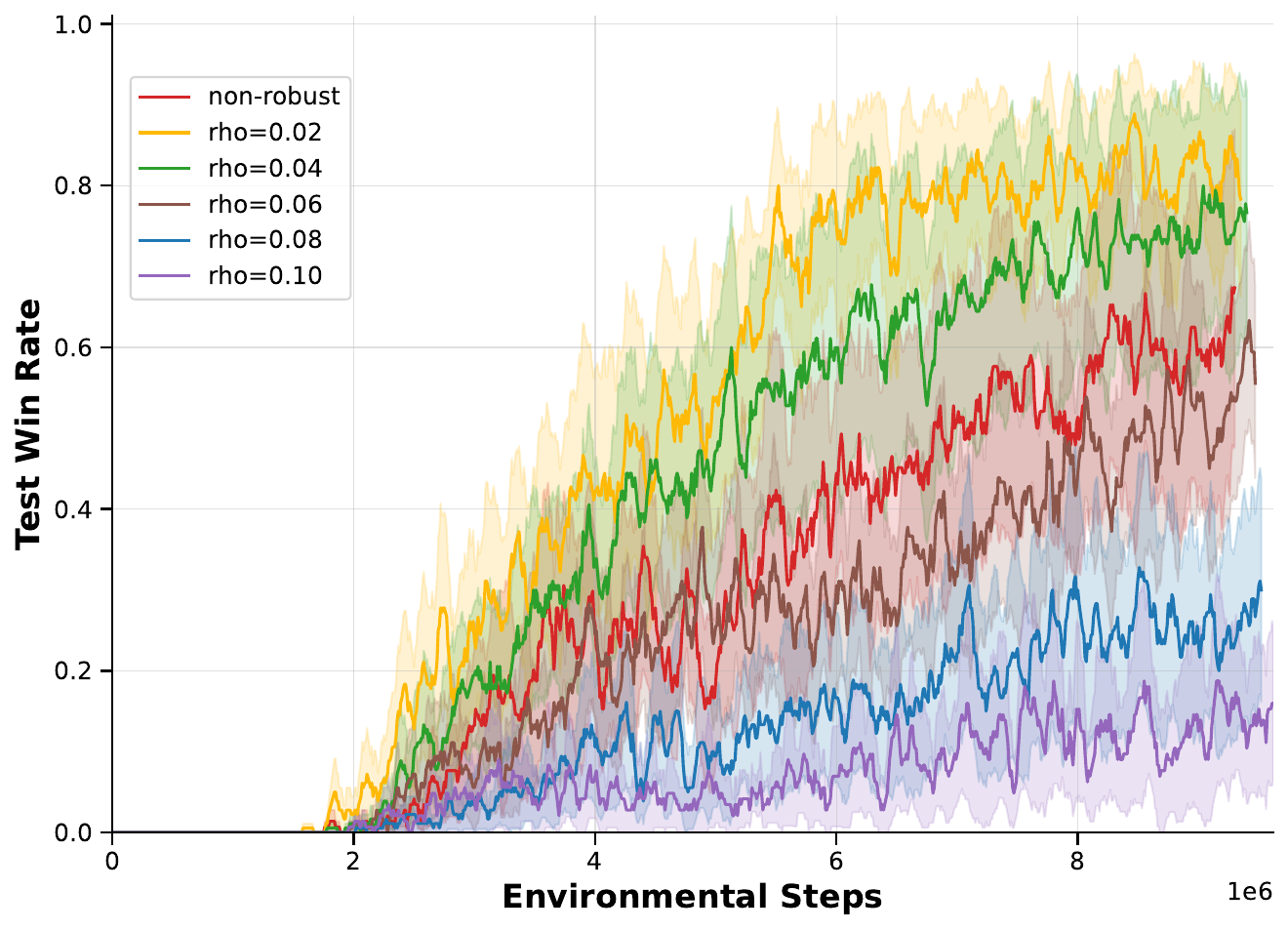}
        \caption{Robust QMIX}
        \label{fig:smac-qmix}
    \end{subfigure}
    \caption{Performance of our robust MARL algorithms and their non-robust baselines in SMAC (\texttt{3s\_vs\_5z} map). Each algorithm is evaluated every 10,000 environment steps, with each evaluation averaged over 32 episodes. Shaded regions denote the standard error across 5 random seeds. For small $\rho$, the robust algorithms significantly outperform their non-robust counterparts.}
    \label{fig:smac}
\end{figure}
\paragraph{Robustness parameter evaluation.} We further compare the final performance of our algorithms against their non-robust baselines, reporting the improvement in the test win rate for different choices of $\rho$ relative to the baseline. The results (averaged over five seeds) are shown in \cref{fig:ablation}. Interestingly, the test win rate first increases as $\rho$ grows, and then decreases. This observation aligns with our theory: when $\rho$ is small relative to the shift level, explicitly modeling distribution shift during training yields improved out-of-distribution performance. However, when $\rho$ becomes large, the robust MARL algorithms become overly conservative, leading to degraded performance.
\begin{figure}[ht]
    \centering
    \begin{subfigure}{0.49\textwidth}
        \centering
        \includegraphics[height=1.5in]{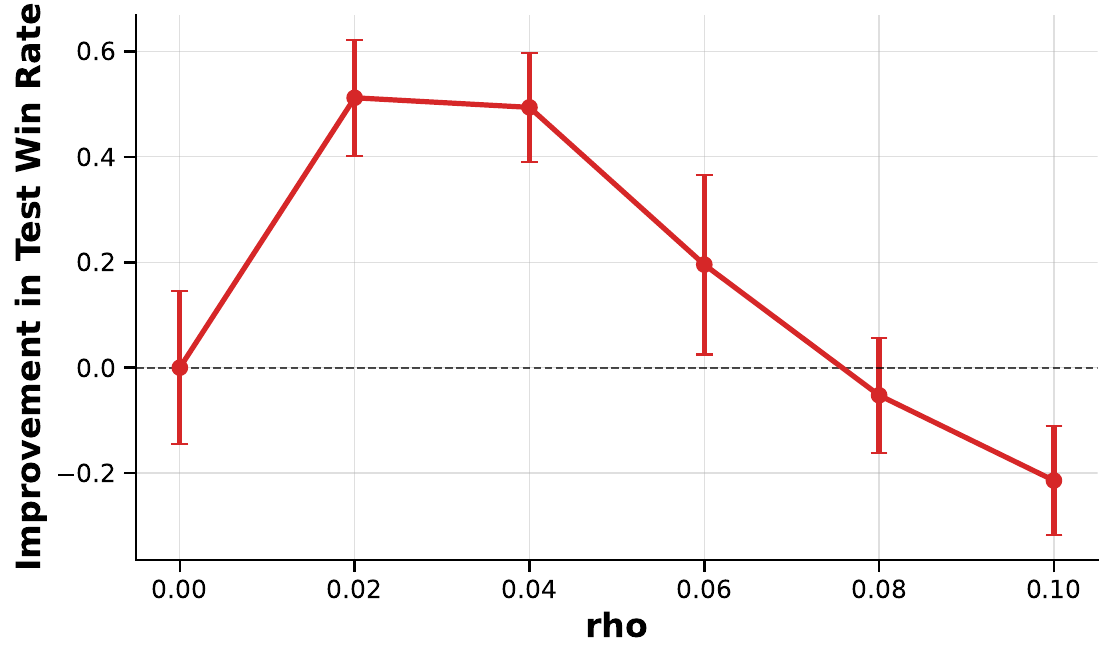}
        \caption{Robust VDN}
        \label{fig:ablation-vdn}
    \end{subfigure}%
    \begin{subfigure}{0.49\textwidth}
        \centering
        \includegraphics[height=1.5in]{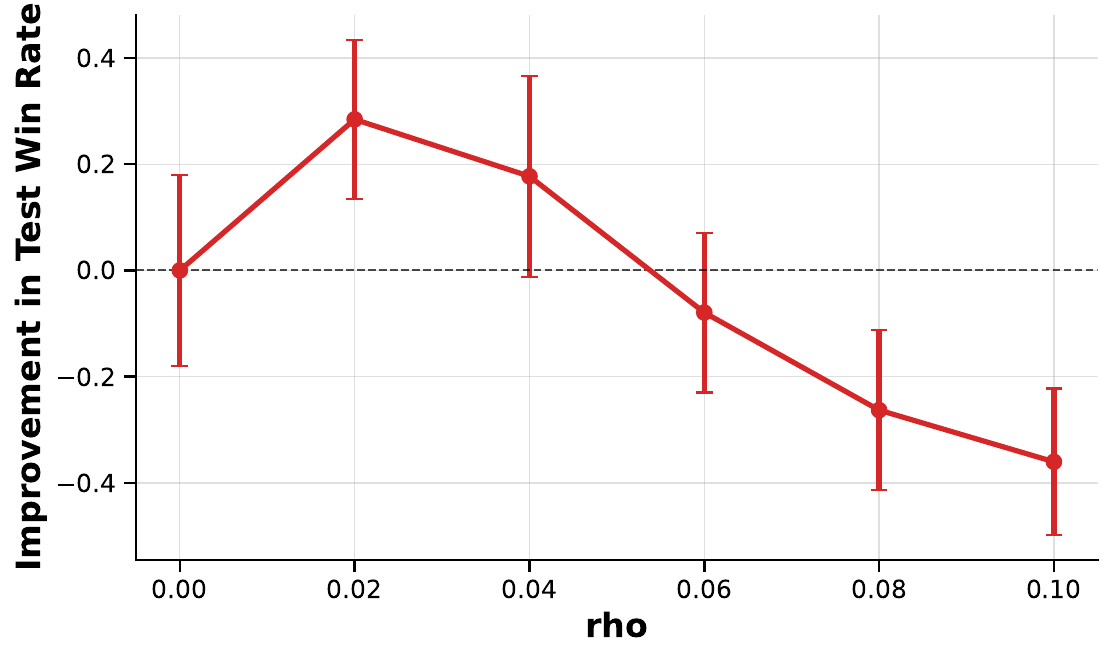}
        \caption{Robust QMIX}
        \label{fig:ablation-qmix}
    \end{subfigure}
    \caption{Improvement in final test win rate of our robust MARL algorithms over their non-robust baselines in SMAC (\texttt{3s\_vs\_5z} map) for different values of $\rho$. Error bars denote the standard error across 5 random seeds.}
    \label{fig:ablation}
\end{figure}
\section{Conclusion}
In this work, we introduce Distributionally robust IGM (\nameref{def:drigm}), a robustness principle for cooperative MARL that extends the classical IGM property to settings with environmental uncertainties. Whereas naïvely ''robustifying'' individual agent policies fails to align robust individual policies with the joint robust policy, the DrIGM offers a principled framework for constructing robust individual action values that remain aligned with the joint robust policy, thereby enabling decentralized greedy execution under uncertainty.

Building on this foundation, we derive DrIGM-based robust value factorization algorithms for VDN, QMIX, and QTRAN, trained via robust Bellman operators under standard uncertainty sets ($\rho$-contamination and total variation). Empirically, on a high-fidelity building HVAC control benchmark, our methods consistently mitigate out-of-distribution performance degradation arising from climatic and seasonal shifts. On a StarCraft II game-playing benchmark, our methods likewise improve out-of-distribution performance under added observation noise. Unlike single-agent robust RL, where conservatism often harms in-distribution returns, we find that robustness in cooperative MARL can simultaneously enhance stability and adaptability.

While we introduced the DrIGM framework for a global uncertainty set, we believe it may be possible to further extend this framework. Future work includes developing DrIGM-compliant algorithms under agent-wise uncertainty sets and exploring additional training paradigms (e.g., decentralized training) to further broaden applicability.

\section*{Reproducibility statement}
We release a github repository containing all code, configuration files, and scripts needed to reproduce our results, including data generation and figure plotting. All proofs for the main paper are stated in \Cref{appendix:proofs}. Algorithm pseudocode is also provided in \Cref{appendix:algorithms}.

\section*{Acknowledgment}

KP acknowledges support from the `PIMCO Postdoctoral Fellow in Data Science' fellowship at the California Institute of Technology. The major part of this work was done while KP was at the California Institute of Technology. EM acknowledges support from NSF award 2240110.
This work acknowledges support from NSF CNS-2146814, CPS-2136197, CNS-2106403, NGSDI-2105648, and funding from the Resnick Sustainability Institute as well as a Quad Fellowship.

\section*{Author Contributions}

\textbf{Chengrui Qu} led the technical development of the project, driving the theory and core methodology, running and analyzing the experiments, and taking a leading role in writing the paper.

\textbf{Christopher Yeh} co-developed the project and contributed in all aspects of experiments and writing.

\textbf{Kishan Panaganti} co-developed the project direction and prepared the initial proof of concept, advised CQ and CY, and played a significant role in paper writing.

\textbf{Eric Mazumdar} co-developed the project direction, advised CQ and KP.

\textbf{Adam Wierman} co-developed the project direction, advised CQ, CY, and KP, and played a significant role in paper writing.

\newpage
\bibliographystyle{iclr2026_conference}
\bibliography{iclr2026_conference}
\newpage
\appendix
\section{Related Work}
\label{sec:related}
\paragraph{Single-agent Distributionally Robust RL (DR-RL).}
The single-agent setting is typically formalized as a robust Markov decision process (MDP). A substantial literature studies finite-sample guarantees for distributionally robust RL, exploring a variety of ambiguity-set designs \citep{iyengar2005robust,xu2012distributionally,wolff2012robust,kaufman2013robust,ho2018fast,smirnova2019distributionally,ho2021partial,goyal2022robust,derman2020distributional,tamar2014scaling,panaganti2020robust,roy2017reinforcement,derman2018soft,mankowitz2019robust}. Most relevant to our work are tabular robust MDPs with $(s,a)$-rectangular uncertainty sets defined by total-variation balls \citep{yang2021towards,panaganti2021sample,xu2023improved,dong2022online,liu2024distributionally,panaganti-rfqi} or $\rho$-contamination models \citep{wang2022policy,zhang2024distributionally}, for which minimax dynamic programming and learning algorithms admit provable performance bounds.

\paragraph{Value factorization methods for cooperative MARL.} Value factorization is the standard mechanism for scalable cooperative MARL under CTDE. Early work adopts simple additivity (VDN \citep{sunehag2017value}), while QMIX \citep{rashid2020monotonic} learn a state-conditioned monotone combiner to enlarge the function class without violating the IGM requirement. QTRAN \citep{son2019qtran} further relax the monotonicity assumption with consistency constraints. Other approaches include attention-based mixers (e.g., QAtten \citep{Qatten}, REFIL \citep{REFILL}), dueling-style decompositions (QPlex \citep{qplex}) and residual designs (ResQ \citep{ResQ}). Building on this body of work, we develop robust value-factorization algorithms with provable robustness guarantees under model uncertainty, enabling robust decentralized execution in partially observable cooperative settings.

\paragraph{Robustness in MARL.}In general MARL, robustness is typically studied within Markov games, where uncertainty can be modeled in different components, such as the state space \citep{han2022solution,he2023robust,zhou2023robustness,zhang2023safe}, other agents \citep{li2019robust,kannan2023smart}, and environmental dynamics \citep{zhang2020robust,liu2025distributionally}. We refer readers to \citet{vial2022robust} for an overview. This work considers robustness to model uncertainty, primarily studied via distributionally robust optimization (DRO) \citep{rahimian2019distributionally,gao2020finite,bertsimas2018data,duchi2018learning,blanchet2019quantifying,fonseca2023,chengrui2025,kuhn2018,noyan2022}, where most prior efforts target Nash equilibria and provide provable (actor–critic / Q-learning) algorithms \citep{zhang2020robust,kardecs2011discounted,ma2023decentralized,blanchet2023double,shi2024sample,liu2025distributionally}, often under full observability or individually rewarded settings. We complement this line by addressing the cooperative, partially observable CTDE regime, where agents receive a single joint reward and act only local observations.

In cooperative MARL, robustness has been modeled along several complementary axes, including adversarial (Byzantine) teammates \citep{li2024byzantine}, state/observation disturbances \citep{Guo_2024}, communication errors \citep{yu2024robust}, risk-sensitive objectives that guard against tail events under a fixed model \citep{shen2023riskq}, and explicit model uncertainty \cite{kwak2010,zhang2020}. Focusing on the last category, \citet{kwak2010} address model uncertainty with sparse, execution-time communication, whereas \citet{zhang2020} study settings in which each agent observes the full state and receives individual reward. Similarly, \citet{bukharin2023robust} also considers settings where each agent receives individual reward, and they achieve robustness by controlling the Lipschitz constant of each agent's policy. In contrast, our work targets robustness to model uncertainty in the cooperative CTDE setting, complementing prior approaches by providing a systematic framework that does \emph{not} require real-time communication and operates under partial observability with a single team reward.
\section{Examples}\label{appendix:example}
\begin{example}[Naïve single-agent robust action values cannot guarantee \nameref{def:drigm}]\label{example:mdp}
    Consider a robust cooperative two-agent task (illustrated in \cref{fig:hardinstance}) with action spaces $\cA_1 = \cA_2 =\{1,2\}$, state space $\cS=\{s_0, s_1,s_2,s_3,s_4\}$, and uncertainty set $\cP=\{P_1,P_2\}$. Let $s_0$ be the initial state, and let $s_1$,$s_2$,$s_3$ and $s_4$ all be absorbing states with zero reward. We assume each agent observes the full state. For $P_1$, all the transitions are fully deterministic. $P_2$ differs from $P_1$ only in transitions on joint actions $(1,2)$ and $(2,1)$, given by:
    \begin{align*}
        \mathbb{P}(S_2\mid 1,2)=\frac{1}{3},\ &\mathbb{P}(S_3\mid 1,2)=\frac{2}{3}, \\
        \mathbb{P}(S_2\mid 2,1)=\frac{2}{3},\ &\mathbb{P}(S_3\mid 2,1)=\frac{1}{3},
    \end{align*}

    As shown in \cref{fig:hardinstance}, the optimal joint action value function at state $s_0$ is (we omit the $\gamma/(1-\gamma)$ factor for clarity)
    \begin{align*}
        &\Qtot{P_1}(s_0,1,1) = 0.7,\ &\Qtot{P_1}(s_0,1,2) = 0.4,\ \ \ &\Qtot{P_1}(s_0,2,1) = 1.0,\ &\Qtot{P_1}(s_0,2,2) = 0.7;\\
        &\Qtot{P_2}(s_0,1,1) =0.7,\ &\Qtot{P_2}(s_0,1,2) = 0.8,\ \ \ &\Qtot{P_2}(s_0,2,1) = 0.6,\ &\Qtot{P_2}(s_0,2,2) = 0.7.
    \end{align*}
    Therefore, the robust joint action-value function $\Qtot{\cP}(s,\ba)=\inf_{P\in\cP}\Qtot{\cP}(s,\ba)$ is given by:
    \begin{align*}
        &\Qtot{\cP}(s_0,1,1) = 0.7, & \Qtot{\cP}(s_0,1,2)= 0.4,\ \ \ &\Qtot{\cP}(s_0,2,1) = 0.6, & \Qtot{\cP}(s_0,2,2) = 0.7, 
    \end{align*}
    It is straightforward to check that the following individual action-value functions $\{Q_i^{P_j}\}_{i,j \in [2]}$ satisfy $\Qtot{P}(s,a_1,a_2) = Q_1^P(s,a_1)+Q_2^P(s,a_2)$ for all $s\in \cS$ and $P\in\cP$, which is a special case of the classical \nameref{def:igm} property:
    \begin{align*}
        &Q_1^{P_1}(s_0,1) = 0, & Q_1^{P_1}(s_0,2)= 0.3,\ \ \ &Q_2^{P_1}(s_0,1) = 0.7, & Q_2^{P_1}(s_0,2) = 0.4, \\
        &Q_1^{P_2}(s_0,1) = 0.2, & Q_1^{P_2}(s_0,2) = 0.1,\ \ \ &Q_2^{P_2}(s_0,1) = 0.5, & Q_2^{P_2}(s_0,2) = 0.6.
    \end{align*}
    Suppose the robust individual action-value function is defined as $\qrob_i(s,a) = \inf_{P\in\cP} Q_i^P(s,a)$, as in the single-agent DR-RL literature. Therefore, the robust individual action-value functions are given by:
    \begin{align*}
        &\qrob_1(s_0,1) = 0, & \qrob_1(s_0,2)= 0.1,\ \ \ &\qrob_2(s_0,1) = 0.5, & \qrob_2(s_0,2) = 0.4, 
    \end{align*}
    At $s_0$, these robustifications fail to satisfy \nameref{def:drigm}:
    \[
        (2,1) = \left(\argmax_{a_1}\  \qrob_1(s_0,a_1), \argmax_{a_2}\  \qrob_2(s_0,a_2)\right) \notin \argmax_{\ba}\ \Qtot{\cP}(s_0,\ba)=\{(1,1),(2,2)\} .
    \]
\end{example}
\begin{figure}[ht]
    \centering%
    \begin{subfigure}[b]{0.45\textwidth}
        \centering
        \resizebox{!}{1.5in}{\begin{tikzpicture}[font=\large]

\node[circle, draw, fill=blue!30, minimum size=1cm] (S1L) at (0, 0) {\shortstack[c]{$S_0$\\$r=0.0$}};
\node[circle, draw, fill=blue!30, minimum size=1cm] (S2L) at (-2, 2) {\shortstack[c]{$S_1$\\$r=0.7$}};
\node[circle, draw, fill=blue!30, minimum size=1cm] (S3L) at (2, 2) {\shortstack[c]{$S_2$\\$r=0.4$}};
\node[circle, draw, fill=blue!30, minimum size=1cm] (S4L) at (-2, -2) {\shortstack[c]{$S_3$\\$r=1.0$}};
\node[circle, draw, fill=blue!30, minimum size=1cm] (S5L) at (2, -2) {\shortstack[c]{$S_4$\\$r=0.7$}};

\draw[->, thick] (S1L) -- (S2L) node[midway, below left] {$\mathbf{a}=(1,1)$};
\draw[->, thick] (S1L) -- (S3L) node[midway, below right] {$\mathbf{a}=(1,2)$};
\draw[->, thick] (S1L) -- (S4L) node[midway, above left] {$\mathbf{a}=(2,1)$};
\draw[->, thick] (S1L) -- (S5L) node[midway, above right] {$\mathbf{a}=(2,2)$};
\draw[->, thick] (S2L) edge [loop left] node {$1$} ();
\draw[->, thick] (S3L) edge [loop right] node {$1$} ();
\draw[->, thick] (S4L) edge [loop left] node {$1$} ();
\draw[->, thick] (S5L) edge [loop right] node {$1$} ();
\end{tikzpicture}}
        \caption{}
        \label{fig:p1}
    \end{subfigure}%
    \begin{subfigure}[b]{0.54\textwidth}
        \centering
        \resizebox{!}{1.5in}{\begin{tikzpicture}[font=\large]

\node[circle, draw, fill=blue!30, minimum size=1cm] (S1L) at (0, 0) {\shortstack[c]{$S_0$\\$r=0.0$}};
\node[circle, draw, fill=blue!30, minimum size=1cm] (S2L) at (-2, 2) {\shortstack[c]{$S_1$\\$r=0.7$}};
\node[circle, draw, fill=blue!30, minimum size=1cm] (S3L) at (2, 2) {\shortstack[c]{$S_2$\\$r=0.4$}};
\node[circle, draw, fill=blue!30, minimum size=1cm] (S4L) at (-2, -2) {\shortstack[c]{$S_3$\\$r=1.0$}};
\node[circle, draw, fill=blue!30, minimum size=1cm] (S5L) at (2, -2) {\shortstack[c]{$S_4$\\$r=0.7$}};

\draw[->, thick] (S1L) -- (S2L) node[midway, below left] {$\mathbf{a}=(1,1)$};
\draw[->, thick] (S1L) -- (S3L) node[midway, below right] {$\textcolor{red}{\mathbf{a}=(2,1)}$ or $\textcolor{red}{\mathbf{a}=(1,2)}$};
\draw[->, thick] (S1L) -- (S4L) node[midway, above left] {$\textcolor{red}{\mathbf{a}=(1,2)}$ or $\textcolor{red}{\mathbf{a}=(2,1)}$};
\draw[->, thick] (S1L) -- (S5L) node[midway, above right] {$\mathbf{a}=(2,2)$};
\draw[->, thick] (S2L) edge [loop left] node {$1$} ();
\draw[->, thick] (S3L) edge [loop right] node {$1$} ();
\draw[->, thick] (S4L) edge [loop left] node {$1$} ();
\draw[->, thick] (S5L) edge [loop right] node {$1$} ();
\end{tikzpicture}}
        \caption{}
        \label{fig:p2}
    \end{subfigure}
    \caption{\cref{fig:p1} is the MDP under transition kernel $P_1$, \cref{fig:p2} is under $P_2$. The two differ in their transition probabilities to $s_2$ and $s_3$.}
    \label{fig:hardinstance}
\end{figure}
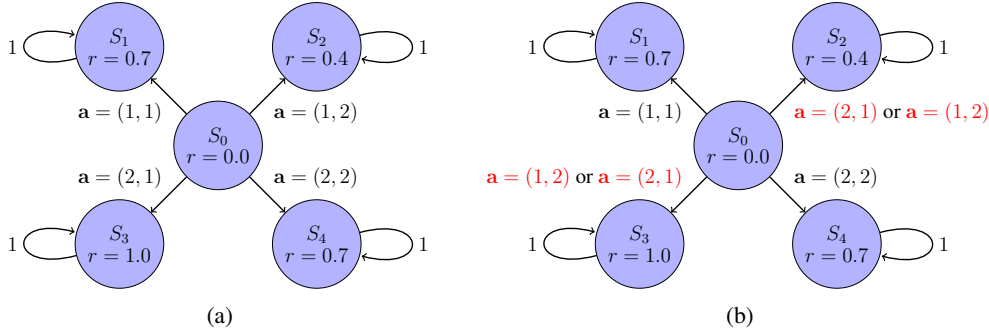
\begin{example}[\nameref{def:drigm} can address cases where \nameref{def:igm} fails.]
    Consider a similar robust cooperative two-agent task (illustrated in \cref{fig:counter-example}) with action spaces $\cA_1 = \cA_2 =\{1,2\}$, state space $\cS=\{s_0, s_1,s_2,s_3,s_4\}$, and uncertainty set $\cP=\{P_1,P_2\}$. Let $P_1$ be the training and testing environment. For $P_1$, all the transitions are fully deterministic, the optimal joint action value function at state $s_0$ is (we omit the $\gamma/(1-\gamma)$ factor for clarity):
    \begin{equation*}
        \Qtot{P_1}(s_0,1,1) = 0.7,\ \Qtot{P_1}(s_0,1,2) = 0.4,\ \ \ \Qtot{P_1}(s_0,2,1) = 1.0,\ \Qtot{P_1}(s_0,2,2) = 0.5.
    \end{equation*}
    $P_2$ differs from $P_1$ in that all actions leads to $S_4$. Therefore, the optimal joint action value function at state $s_0$ is (we omit the $\gamma/(1-\gamma)$ factor for clarity):
    \begin{equation*}
        \Qtot{P_2}(s_0,1,1) = 0.4,\ \Qtot{P_2}(s_0,1,2) = 0.4,\ \ \ \Qtot{P_2}(s_0,2,1) = 0.4,\ \Qtot{P_2}(s_0,2,2) = 0.4.
    \end{equation*}
    It can be verified that $P_1$ does not admit a VDN-style value decomposition, but the worst case, $P_2$, admits a feasible VDN-style value decomposition.
\end{example}

\begin{figure}[ht]
    \centering%
    \begin{subfigure}[b]{0.45\textwidth}
        \centering
        \resizebox{!}{1.5in}{\begin{tikzpicture}[font=\large]

\node[circle, draw, fill=blue!30, minimum size=1cm] (S1L) at (0, 0) {\shortstack[c]{$S_0$\\$r=0.0$}};
\node[circle, draw, fill=blue!30, minimum size=1cm] (S2L) at (-2, 2) {\shortstack[c]{$S_1$\\$r=0.7$}};
\node[circle, draw, fill=blue!30, minimum size=1cm] (S3L) at (2, 2) {\shortstack[c]{$S_2$\\$r=0.4$}};
\node[circle, draw, fill=blue!30, minimum size=1cm] (S4L) at (-2, -2) {\shortstack[c]{$S_3$\\$r=1.0$}};
\node[circle, draw, fill=blue!30, minimum size=1cm] (S5L) at (2, -2) {\shortstack[c]{$S_4$\\\textcolor{red}{$r=0.5$}}};

\draw[->, thick] (S1L) -- (S2L) node[midway, below left] {$\mathbf{a}=(1,1)$};
\draw[->, thick] (S1L) -- (S3L) node[midway, below right] {$\mathbf{a}=(1,2)$};
\draw[->, thick] (S1L) -- (S4L) node[midway, above left] {$\mathbf{a}=(2,1)$};
\draw[->, thick] (S1L) -- (S5L) node[midway, above right] {$\mathbf{a}=(2,2)$};
\draw[->, thick] (S2L) edge [loop left] node {$1$} ();
\draw[->, thick] (S3L) edge [loop right] node {$1$} ();
\draw[->, thick] (S4L) edge [loop left] node {$1$} ();
\draw[->, thick] (S5L) edge [loop right] node {$1$} ();
\end{tikzpicture}}
        \caption{}
        \label{fig:p3}
    \end{subfigure}%
    \begin{subfigure}[b]{0.54\textwidth}
        \centering
        \resizebox{!}{1.5in}{\begin{tikzpicture}[font=\large]

\node[circle, draw, fill=blue!30, minimum size=1cm] (S1L) at (0, 0) {\shortstack[c]{$S_0$\\$r=0.0$}};
\node[circle, draw, fill=blue!30, minimum size=1cm] (S2L) at (-2, 2) {\shortstack[c]{$S_1$\\$r=0.7$}};
\node[circle, draw, fill=blue!30, minimum size=1cm] (S3L) at (2, 2) {\shortstack[c]{$S_2$\\$r=0.4$}};
\node[circle, draw, fill=blue!30, minimum size=1cm] (S4L) at (-2, -2) {\shortstack[c]{$S_3$\\$r=1.0$}};
\node[circle, draw, fill=blue!30, minimum size=1cm] (S5L) at (2, -2) {\shortstack[c]{$S_4$\\\textcolor{red}{$r=0.5$}}};

\draw[->, thick] (S1L) -- (S2L) node[midway, below left] { };
\draw[->, thick] (S1L) -- (S3L) node[midway, below right] {\textcolor{red}{all actions}};
\draw[->, thick] (S1L) -- (S4L) node[midway, above left] { };
\draw[->, thick] (S1L) -- (S5L) node[midway, above right] { };
\draw[->, thick] (S2L) edge [loop left] node {$1$} ();
\draw[->, thick] (S3L) edge [loop right] node {$1$} ();
\draw[->, thick] (S4L) edge [loop left] node {$1$} ();
\draw[->, thick] (S5L) edge [loop right] node {$1$} ();
\end{tikzpicture}}
        \caption{}
        \label{fig:p4}
    \end{subfigure}
    \caption{\cref{fig:p3} is the MDP under transition kernel $P_1$, \cref{fig:p4} is under $P_2$.}
    \label{fig:counter-example}
\end{figure}
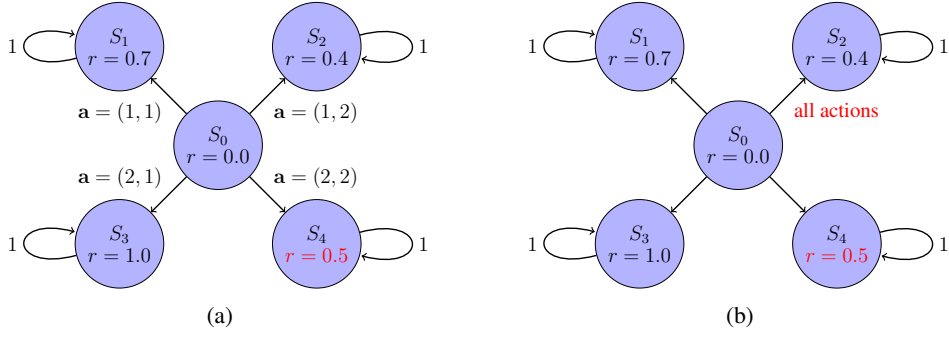

\section{Proofs}
\label{appendix:proofs}

\subsection{Proof of \Cref{theorem:igm-drigm}}
\label{appendix:proof-igm-drigm}
\begin{proof}
Recall that for all $\ba \in \cA$, we have
\begin{align}
    \Qtot{\cP}(\bh,\ba) &= \inf_{P \in \cP} \Qtot{P}(\bh,\ba)
        & \tag{\cref{eq:optimal-robust-joint-Q}} \\
    \Pworst(\bh,\ba) &\in \arg\inf_{P \in \cP} \Qtot{P}(\bh,\ba)
        \tag{\Cref{eq:centralized-robust-dynamics}}.
\end{align}
Thus, $\Qtot{\Pworst(\bh,\ba)}(\bh,\ba) = \Qtot{\cP}(\bh,\ba)$. In \Cref{eq:centralized-robust-dynamics}, we also defined $\bar\ba \in \argmax_{\ba \in \cA} \Qtot{\cP}(\bh,\ba)$. Since $\Pworst(\bh, \bar\ba) \in \cP$, by assumption there exist $[Q_i^{\Pworst(\bh, \bar\ba)}]_{i \in [N]}$ that satisfy \nameref{def:igm} for $\Qtot{\Pworst(\bh, \bar\ba)}$ under $\bh$. Therefore,
\begin{align}
    & \left( \argmax_{a_1} \qrob_1(h_1, a_1), \dotsc, \argmax_{a_N} \qrob_N(h_N, a_N) \right)
        \nonumber \\
    &= \left( \argmax_{a_1} Q_1^{\Pworst(\bh,\bar\ba)}(h_1, a_1), \dotsc, \argmax_{a_N} Q_N^{\Pworst(\bh,\bar\ba)}(h_N, a_N) \right)
        & \tag{\cref{eq:centralized-robust-dynamics}} \\
    &\subseteq \argmax_{\ba} \Qtot{\Pworst(\bh,\bar\ba)}(\bh,\ba)
        & \tag{\nameref{def:igm}} \\
    &\subseteq \argmax_{\ba} \Qtot{\Pworst(\bh,\ba)}(\bh,\ba)
        & \tag{\cref{eq:centralized-robust-dynamics}} \\
    &= \argmax_{\ba} \Qtot{\cP}(\bh, \ba), \nonumber
\end{align}
which shows that $[\qrob_i]_{i \in [N]}$ satisfy \nameref{def:drigm} under $\bh$.
\end{proof}
\begin{remark}
    The statement of \Cref{theorem:igm-drigm} assumes that for all $P \in \Pcal$ there exist $[Q_i^P]_{i \in [N]}$ satisfying \nameref{def:igm} for $\Qtot{P}$ under joint history $\bh \in \Hcal$. However, we note here that this assumption can be relaxed to only requiring that there exist $[Q_i]_{i \in [N]}$ satisfying \nameref{def:igm} for $\Qtot{\Pworst}$ under $\bh$.
\end{remark}

\begin{remark} 
As shown in \cref{example:mdp}, an adversarial model $P \in \cP$ that minimizes one agent's value need not coincide with the adversarial model $P' \in \cP$ that minimizes the joint value. \cref{theorem:igm-drigm} circumvents this problem by directly considering the global worst case, i.e., 
\begin{equation}
\label{eq:qrob_second_show_up}
    \qrob_i(h_i,a_i) := Q_i^{P^{\mathrm{worst}}(\bh,\bar{\ba})}(h_i,a_i).
\end{equation}
Given this definition, a robust joint action is given by $\overline{\ba}=(1,1)$ or $(2,2)$,  and there exist robust individual action-value functions are given by:
    \begin{align*}
        &\qrob_1(s_0,1) = 0.2, & \qrob_1(s_0,2)= 0.1,\ \ \ &\qrob_2(s_0,1) = 0.7, & \qrob_2(s_0,2) = 0.4, 
    \end{align*}
where the robust individual action is given by $a_1=1, a_2=1$. Alternatively, another set of individual action-value functions are given by:  
    \begin{align*}
        &\qrob_1(s_0,1) = 0, & \qrob_1(s_0,2)= 0.3,\ \ \ &\qrob_2(s_0,1) = 0.5, & \qrob_2(s_0,2) = 0.6, 
    \end{align*}
where the robust individual action is given by $a_1=2, a_2=2$. Either way, the robust individual actions are aligned with the joint actions.
\end{remark}

\subsection{Proof of \cref{theorem:factorizations}}
\label{appendix:proof-of-factorization}

\begin{proof}
We proceed by proving that \nameref{def:igm} holds for under each of the three conditions given in \Cref{theorem:factorizations}. By \Cref{theorem:igm-drigm}, this suffices to show that \nameref{def:drigm} holds for under each of the three conditions given in \Cref{theorem:factorizations}. 

\paragraph{VDN condition.}
Given a joint history $\bh\in\cH$, for any $P\in\cP$, we have
\[
    \Qtot{P}(\bh, \ba) = \sum_{i\in[N]} Q_i^P(h_i, a_i),
    \quad\forall \ba=(a_1,\dotsc,a_N)\in\cA.
\]
Let $\bar{a}_i = \argmax_{a_i}Q_i^{P}(h_i,a_i)$ for $i\in[N]$, and let $\bar{\ba}=[\bar{a}_i]_{i\in[N]}$. Then, for any $\ba\in\cA$,
\begin{align}
    \Qtot{P}(\bh,\ba)
    &= \sum_{i\in[N]} Q_i^P(h_i, a_i),
        \nonumber\\
    &\le \sum_{i\in[N]} Q_i^P(h_i, \bar{a}_i)
        \tag{Definition of $\bar{a}_i$}\\
    &= \Qtot{P}(\bh,\bar{\ba}).
        \nonumber
\end{align}
This implies that 
\[
    \left( \argmax_{a_1} Q_1^{P}(h_1, a_1), \dotsc,  \argmax_{a_N} Q_N^{P}(h_N, a_N) \right)
    \subseteq \argmax_{\ba} \Qtot{P}(\bh,\ba),
\]
so $[Q_i^P]_{i \in [N]}$ satisfy \nameref{def:igm} for $\Qtot{P}$ under $\bh$. Therefore, by \cref{theorem:igm-drigm}, $[\qrob_i]_{i \in [N]}$ satisfy \nameref{def:drigm} for $\Qtot{\cP}$ under $\bh$.

\paragraph{QMIX condition.}
Given a joint history $\bh\in\cH$, for any $P\in\cP$, suppose the following monotonicity property holds:
\[
    \frac{\partial \Qtot{P}(\bh,\ba)}{\partial Q_i^P(h_i,a_i)} \geq 0,
    \quad\forall i\in[N],\ \ba=(a_1,\dotsc,a_N)\in\cA.
\]
Let $\bar{a}_i=\argmax_{a_i}Q_i^{P}(h_i,a_i)$ for $i\in[N]$, and let $\bar{\ba}=[\bar{a}_i]_{i\in[N]}$. Given that $\partial \Qtot{P}(\bh,\ba)/\partial Q_1^P(h_1,a_1) \geq 0$ and $\bar{a}_1=\argmax_{a_1}Q_1^{P}(h_1,a_1)$, we have (for any $\ba\in\cA$)
\begin{align}
    \Qtot{P}(\bh,\ba)&\le \Qtot{P}(\bh,\bar{a}_1,a_2,\dotsc,a_N).\nonumber
\end{align}
Applying the same logic to all $i\in[N]$ yields that for any $\ba\in\cA$,
\begin{align}
    \Qtot{P}(\bh,\ba)&\le \Qtot{P}(\bh,\bar{a}_1,\bar{a}_2,\dotsc,\bar{a}_N) \\
    &=\Qtot{P}(  \bh,\bar{\ba}).
\end{align}
This implies that 
\[
    \left( \argmax_{a_1} Q_1^{P}(h_1, a_1), \dotsc,  \argmax_{a_N} Q_N^{P}(h_N, a_N) \right)
    \subseteq \argmax_{\ba} \Qtot{P}(\bh,\ba),
\]
so $[Q_i^P]_{i \in [N]}$ satisfy \nameref{def:igm} for $\Qtot{P}$ under $\bh$. Therefore, by \cref{theorem:igm-drigm}, $[\qrob_i]_{i \in [N]}$ satisfy \nameref{def:drigm} for $\Qtot{\cP}$ under $\bh$.

\paragraph{QTRAN condition.}
Given a joint history $\bh\in\cH$, for any $P\in\cP$, we have (for all $\ba=(a_1,\dotsc,a_N)\in\cA$):
\begin{equation}\label{eq:qtran}
    \sum_{i=1}^{N} Q_i^P(h_i,a_i) - \Qtot{P}(\bh,\ba) + V_{\mathrm{tot}}(\bh)=
        \begin{cases}
        0,    & \ba=\bar{\ba}, \\
        \ge 0,& \ba\neq \bar{\ba},
        \end{cases}
\end{equation}
where $\bar{\ba}=[\bar{a}_i]_{i\in[N]}$ with $\bar{a}_i=\argmax_{a_i}Q_i^{P}(h_i,a_i)$ and $V_{\mathrm{tot}}(\bh)=\max_{\ba}\Qtot{P}(\bh,\ba)-\sum_{i=1}^{N}Q_i^{P}(h_i,a_i)$. Therefore,
\begin{align*}
    \Qtot{P}(\bh,\bar{\ba})&=\sum_{i=1}^{N}Q_i^{P}(h_i,\bar{a}_i)+V_{\mathrm{tot}}(\bh) \tag{\cref{eq:qtran}} \\
    &=\max_{\ba}\Qtot{P}(\bh,\ba), \tag{Definition of $V(\bh)$}
\end{align*}
This implies that 
\[
    \left( \argmax_{a_1} Q_1^{P}(h_1, a_1), \dotsc,  \argmax_{a_N} Q_N^{P}(h_N, a_N) \right)
    \subseteq \argmax_{\ba} \Qtot{P}(\bh,\ba),
\]
so $[Q_i^P]_{i \in [N]}$ satisfy \nameref{def:igm} for $\Qtot{P}$ under $\bh$. Therefore, by \cref{theorem:igm-drigm}, $[\qrob_i]_{i \in [N]}$ satisfy \nameref{def:drigm} for $\Qtot{\cP}$ under $\bh$.

Combining the three cases concludes the proof of \cref{theorem:factorizations}.
\end{proof}
\subsection{Proof of \cref{theorem:out-of-sample-performance}}\label{appendix:proof-of-out-of-sample}
\begin{proof}
Recall that given an uncertainty set $\cP$, the robust joint action value is defined as,
\begin{equation}
    \Qtot{\cP}(\bh,\ba) := \inf_{P\in\cP}\Qtot{P}(\bh,\ba),\ \forall(\bh,\ba)\in\cH\times\cA.
\end{equation}
Given that $P_{\mathrm{test}}\in\cP$, we directly have:
\begin{equation}
    \Qtot{\cP}(\bh,\ba)\le \Qtot{P_{\mathrm{test}}}(\bh,\ba),\ \forall \bh\in\cH,\ \ba\in\cA.
\end{equation}
This concludes the proof.
\end{proof}
\section{Robust Bellman Operators}\label{appendix:robust-bellman}
We start by introducing the assumptions needed to derive the robust bellman operators.
\begin{assumption}[Fail-state \citep{panaganti-rfqi}]\label{assumption:fail-state}
The robust Dec-POMDP has a \emph{fail state} $s_f$ such that
\begin{equation}
    r(s_f,\ba)=0 \quad \text{and} \quad P_{s_f,\ba}(s_f)=1,
\qquad \forall\, \ba\in\mathcal{A},\ \forall\, P\in\mathcal{P}.
\end{equation}
\end{assumption}

This requirement is mild, as fail states naturally arise in both simulated and physical systems.  
For example, in robotics, a configuration where the robot falls and cannot recover, whether in simulators such as MuJoCo or in real hardware, serves as a natural fail state. We can further relax it to the following assumption.
\begin{assumption}[Vanishing minimal value \citep{lu2024distributionally}]\label{assumption:vanish-min}
The underlying RMDP satisfies
\begin{equation}
    \min_{s\in\mathcal S} V^{\cP}_{\mathrm{tot}}(s)=0.
\end{equation}
Without loss of generality, we also assume that any initial state
\(s_1 \notin \arg\min_{s\in\mathcal S} V^{\cP}_{\mathrm{tot}}(s)\).
\end{assumption}
This assumption states that the lowest achievable robust value across all states is normalized to zero. The exclusion of the minimizing state as the starting point rules out the degenerate case where the agent begins with zero guaranteed return.
\paragraph{$\rho$-contamination uncertainty set.}Given a $\rho$-contamination uncertainty set defined in \cref{eq:contamination}, the robust bellman operator can be expanded as:
\begin{align}
    (\cT^{\cP} Q)(\bh,\ba)
&= r(s,\ba)
    +\gamma\inf_{P_{\bh,\ba}\in \cP_{\bh,\ba}}
    \E_{\bh'\sim P_{\bh,\ba}}\left[\max_{\ba'\in\cA} Q(\bh',\ba')\right].\\
    &= r(s,\ba)
    +\gamma(1-\rho)
    \E_{\bh'\sim P^0_{\bh,\ba}}\left[\max_{\ba'\in\cA} Q(\bh',\ba')\right]\\
    &\qquad+\rho\min_{s'\in\cS}V_{\mathrm{tot}}^{\cP}(s').
\end{align}
Under \cref{assumption:fail-state} (\cref{assumption:vanish-min}), we obtain that:
\begin{align}
    (\cT^{\cP} Q)(\bh,\ba)&= r(s,\ba)
    +\gamma(1-\rho)
    \E_{\bh'\sim P^0_{\bh,\ba}}\left[\max_{\ba'\in\cA} Q(\bh',\ba')\right]\tag{\cref{assumption:fail-state} (\cref{assumption:vanish-min})}\\
    &=r(s,\ba)
    +\gamma(1-\rho)
    \E_{\bh'\sim P^0_{\bh,\ba}}\left[ Q(\bh',\bar{\ba}')\right], \tag{\cref{def:drigm}}
\end{align}
where $\bar{a}_i' = \argmax_{a_i'}\qrob_i(h_i',a_i')$.
\paragraph{TV-uncertainty set.}Leveraging \citet{panaganti-rfqi}[Proposition 1], given a TV-uncertainty set defined in \cref{eq:tv}, the robust bellman operator can be expanded as:
\begin{align}
    (\cT \Qtot{\cP})(\bh,\ba)&=r(s,\ba)-\inf_{\eta\in[0,\frac{2}{\rho(1-\gamma)}]}\gamma\E_{\bh'\sim P^0_{\bh,\ba}}\bigg(\rho\left[\eta(s,\ba)-\inf_{s''\in\cS}V_{\mathrm{tot}}^\cP(s'')\right]_+-\eta \nonumber\\
    &\qquad+\bigg[\eta(s,\ba)-
    \max_{\ba'\in\cA}\Qtot{\cP}(\bh',\ba')\bigg]_+\bigg).
\end{align}
Under \cref{assumption:fail-state} (\cref{assumption:vanish-min}), we obtain that:
\begin{align}
    (\cT \Qtot{\cP})(\bh,\ba)&=r(s,\ba)-\inf_{\eta\in[0,\frac{2}{\rho(1-\gamma)}]}\gamma\E_{\bh'\sim P^0_{\bh,\ba}}\bigg(-(1-\rho)\eta(s,\ba) \nonumber\\
    &\qquad+\bigg[\eta(s,\ba)-
    \max_{\ba'\in\cA}\Qtot{\cP}(\bh',\ba')\bigg]_+\bigg). \tag{\cref{assumption:fail-state} (\cref{assumption:vanish-min})}\\
    &= r(s,\ba)-\inf_{\eta\in[0,\frac{2}{\rho(1-\gamma)}]}\gamma\E_{\bh'\sim P^0_{\bh,\ba}}\bigg(-(1-\rho)\eta(s,\ba) \nonumber\\
    &\qquad+\bigg[\eta(s,\ba)-
    \max_{\ba'\in\cA}\Qtot{\cP}(h_1',\dotsc, h_N', \bar{a}_1',\dotsc,\bar{a}_N')\bigg]_+\bigg).\tag{\cref{def:drigm}}
\end{align}

\section{Algorithms}\label{appendix:algorithms}
We offer a full description of our algorithms in this section, presented in \cref{alg:vdn-rho,alg:qmix-rho,alg:qtran-rho,alg:vdn-tv,alg:qmix-tv,alg:qtran-tv}.
\begin{algorithm}[htbp]
\caption{Robust VDN with $\rho$-contamination uncertainty set}
\label{alg:vdn-rho}
\begin{algorithmic}[1]
    \STATE Input robustness parameter $\rho$, target network update frequency $f$ and $\varepsilon$
    \STATE Initialize replay buffer $\cD$
    \STATE Initialize $[\qrob_i]_{i\in[N]}$ with random parameters $\theta$
    \STATE Initialize target parameters $\theta^- = \theta$
    \FOR{episode $e=1,\dotsc,E$}
        \STATE Observe initial state $s^0$ and observation $o_i^0=\sigma_i(s^0)$ for each agent $i$.
        \FOR{$t=1,\dotsc,T$}
            \STATE Each agent $i$ choose its action $a_i^t$ using $\varepsilon$-greedy policy.
            \STATE Take joint action $\ba^t$, observe the next state $s^{t+1}$, reward $r^t$ and observation $o_i^{t+1}=\sigma_i(s^{t+1})$ for each agent $i$
            \STATE Store transition $(\bh^t,\ba^t,r^t,\bh^{t+1})$ in replay buffer $\cD$
            \STATE Sample a mini-batch of transitions $(\bh,\ba,r,\bh')$ from $\cD$
            \STATE Set $\bar{\ba}'=[\argmax_{a_i'}\qrob_i(h_i',a_i';\theta^-)]_{i\in[N]}$
            \STATE Set $\Qtot{\cP,\mathrm{VDN}}(\bh,\ba;\theta)=\sum_{i\in[N]}\qrob_i(h_i,a_i;\theta)$
            \STATE Set $\Qtot{\cP,\mathrm{VDN}}(\bh',\bar{\ba}';\theta^-)=\sum_{i\in[N]}\qrob_i(h_i',\bar{a}_i';\theta^-)$
            \STATE Set $\ytar=r+\gamma(1-\rho)\Qtot{\cP,\mathrm{VDN}}(\bh',\bar{\ba}';\theta^-)$
            \STATE Calculate TD loss $\ltd=(\Qtot{\cP,\mathrm{VDN}}(\bh,\ba;\theta)-\ytar)^2$
            \STATE Update $\theta$ by minimizing $\ltd$
            \STATE Update $\theta^-=\theta$ with frequency $f$
        \ENDFOR
    \ENDFOR
\end{algorithmic}    
\end{algorithm}

\begin{algorithm}[htbp]
\caption{Robust QMIX with $\rho$-contamination uncertainty set}
\label{alg:qmix-rho}
\begin{algorithmic}[1]
    \STATE Input robustness parameter $\rho$, target network update frequency $f$ and $\varepsilon$
    \STATE Initialize replay buffer $\cD$
    \STATE Initialize $[\qrob_i]_{i\in[N]}$ and mixing network $f_\theta$ with random parameters $\theta$
    \STATE Initialize target parameters $\theta^- = \theta$
    \FOR{episode $e=1,\dotsc,E$}
        \STATE Observe initial state $s^0$ and observation $o_i^0=\sigma_i(s^0)$ for each agent $i$.
        \FOR{$t=1,\dotsc,T$}
            \STATE Each agent $i$ choose its action $a_i^t$ using $\varepsilon$-greedy policy.
            \STATE Take joint action $\ba^t$, observe the next state $s^{t+1}$, reward $r^t$ and observation $o_i^{t+1}=\sigma_i(s^{t+1})$ for each agent $i$
            \STATE Store transition $(\bh^t,\ba^t,s^t,r^t,\bh^{t+1},s^{t+1})$ in replay buffer $\cD$
            \STATE Sample a mini-batch of transitions $(\bh,\ba,s,r,\bh',s')$ from $\cD$
            \STATE Set $\bar{\ba}'=[\argmax_{a_i'}\qrob_i(h_i',a_i';\theta^-)]_{i\in[N]}$
            \STATE Set $\Qtot{\cP,\mathrm{QMIX}}(\bh,\ba;\theta)=f_\theta((\qrob_i(h_i,a_i;\theta))_{i\in[N]},s)$
            \STATE Set $\Qtot{\cP,\mathrm{QMIX}}(\bh',\bar{\ba}';\theta^-)=f_{\theta^-}((\qrob_i(h_i',\bar{a}_i';\theta^-))_{i\in[N]},s')$
            \STATE Set $\ytar=r+\gamma(1-\rho)\Qtot{\cP,\mathrm{QMIX}}(\bh',\bar{\ba}';\theta^-)$
            \STATE Calculate TD loss $\ltd=(\Qtot{\cP,\mathrm{QMIX}}(\bh,\ba;\theta)-\ytar)^2$
            \STATE Update $\theta$ by minimizing $\ltd$
            \STATE Update $\theta^-=\theta$ with frequency $f$
        \ENDFOR
    \ENDFOR
\end{algorithmic}    
\end{algorithm}

\begin{algorithm}[htbp]
\caption{Robust QTRAN with $\rho$-contamination uncertainty set}
\label{alg:qtran-rho}
\begin{algorithmic}[1]
    \STATE Input robustness parameter $\rho$, target network update frequency $f$ and $\varepsilon$
    \STATE Initialize replay buffer $\cD$
    \STATE Initialize $[\qrob_i]_{i\in[N]}$, $\Qtot{\cP,\mathrm{QTRAN}}$ and $\Vtot^{\cP,\mathrm{QTRAN}}$ with random parameters $\theta$
    \STATE Initialize target parameters $\theta^- = \theta$
    \FOR{episode $e=1,\dotsc,E$}
        \STATE Observe initial state $s^0$ and observation $o_i^0=\sigma_i(s^0)$ for each agent $i$.
        \FOR{$t=1,\dotsc,T$}
            \STATE Each agent $i$ choose its action $a_i^t$ using $\varepsilon$-greedy policy.
            \STATE Take joint action $\ba^t$, observe the next state $s^{t+1}$, reward $r^t$ and observation $o_i^{t+1}=\sigma_i(s^{t+1})$ for each agent $i$
            \STATE Store transition $(\bh^t,\ba^t,r^t,\bh^{t+1})$ in replay buffer $\cD$
            \STATE Sample a mini-batch of transitions $(\bh,\ba,r,\bh')$ from $\cD$
            \STATE Set $\bar{\ba}'=[\argmax_{a_i'}\qrob_i(h_i',a_i';\theta^-)]_{i\in[N]}$
            \STATE Set $\ytar=r+\gamma(1-\rho)\Qtot{\cP,\mathrm{QTRAN}}(\bh',\bar{\ba}';\theta^-)$
            \STATE Calculate TD loss $\ltd=(\Qtot{\cP,\mathrm{QTRAN}}(\bh,\ba;\theta)-\ytar)^2$
            \STATE Calculate $L_{\mathrm{opt}}$ using \cref{eq:lopt}
            \STATE Calculate $L_{\mathrm{nopt}}$ using \cref{eq:lnopt}
            \STATE Update $\theta$ by minimizing $L=\ltd+L_{\mathrm{opt}}+L_{\mathrm{nopt}}$
            \STATE Update $\theta^-=\theta$ with frequency $f$
        \ENDFOR
    \ENDFOR
\end{algorithmic}    
\end{algorithm}

\begin{algorithm}[htbp]
\caption{Robust VDN with TV uncertainty set}
\label{alg:vdn-tv}
\begin{algorithmic}[1]
    \STATE Input robustness parameter $\rho$, target network update frequency $f$ and $\varepsilon$
    \STATE Initialize replay buffer $\cD$
    \STATE Initialize $[\qrob_i]_{i\in[N]}$ with random parameters $\theta$
    \STATE Initialize dual network $\eta_\xi$ with random parameters $\xi$
    \STATE Initialize target parameters $\theta^- = \theta$
    \FOR{episode $e=1,\dotsc,E$}
        \STATE Observe initial state $s^0$ and observation $o_i^0=\sigma_i(s^0)$ for each agent $i$.
        \FOR{$t=1,\dotsc,T$}
            \STATE Each agent $i$ choose its action $a_i^t$ using $\varepsilon$-greedy policy.
            \STATE Take joint action $\ba^t$, observe the next state $s^{t+1}$, reward $r^t$ and observation $o_i^{t+1}=\sigma_i(s^{t+1})$ for each agent $i$
            \STATE Store transition $(\bh^t,\ba^t,s^t,r^t,\bh^{t+1})$ in replay buffer $\cD$
            \STATE Sample a mini-batch of transitions $(\bh,\ba,r,s,\bh')$
            \STATE Calculate dual loss $L_{\mathrm{dual}}$ using \cref{eq:dual}
            \STATE Update $\xi$ by minimizing $L_{\mathrm{dual}}$
            \STATE Sample another mini-batch of transitions $(\bh,\ba,s,r,\bh')$ from $\cD$
            \STATE Set $\bar{\ba}'=[\argmax_{a_i'}\qrob_i(h_i',a_i';\theta^-)]_{i\in[N]}$
            \STATE Set $\Qtot{\cP,\mathrm{VDN}}(\bh,\ba;\theta)=\sum_{i\in[N]}\qrob_i(h_i,a_i;\theta)$
            \STATE Set $\Qtot{\cP,\mathrm{VDN}}(\bh',\bar{\ba}';\theta^-)=\sum_{i\in[N]}\qrob_i(h_i',\bar{a}_i';\theta^-)$
            \STATE Set $\ytar=r+\gamma(1-\rho)\eta_{\xi}(s,\ba)-\gamma[\eta_\xi(s,\ba)-\Qtot{\cP,\mathrm{VDN}}(\bh',\bar{\ba}';\theta^-)]_+$
            \STATE Calculate TD loss $\ltd=(\Qtot{\cP,\mathrm{VDN}}(\bh,\ba;\theta)-\ytar)^2$
            \STATE Update $\theta$ by minimizing $\ltd$
            \STATE Update $\theta^-=\theta$ with frequency $f$
        \ENDFOR
    \ENDFOR
\end{algorithmic}    
\end{algorithm}

\begin{algorithm}[htbp]
\caption{Robust QMIX with TV uncertainty set}
\label{alg:qmix-tv}
\begin{algorithmic}[1]
    \STATE Input robustness parameter $\rho$, target network update frequency $f$ and $\varepsilon$
    \STATE Initialize replay buffer $\cD$
    \STATE Initialize $[\qrob_i]_{i\in[N]}$ and mixing network $f_\theta$ with random parameters $\theta$
    \STATE Initialize dual network $\eta_\xi$ with random parameters $\xi$
    \STATE Initialize target parameters $\theta^- = \theta$
    \FOR{episode $e=1,\dotsc,E$}
        \STATE Observe initial state $s^0$ and observation $o_i^0=\sigma_i(s^0)$ for each agent $i$.
        \FOR{$t=1,\dotsc,T$}
            \STATE Each agent $i$ choose its action $a_i^t$ using $\varepsilon$-greedy policy.
            \STATE Take joint action $\ba^t$, observe the next state $s^{t+1}$, reward $r^t$ and observation $o_i^{t+1}=\sigma_i(s^{t+1})$ for each agent $i$
            \STATE Store transition $(\bh^t,\ba^t,s^t,r^t,\bh^{t+1})$ in replay buffer $\cD$
            \STATE Sample a mini-batch of transitions $(\bh,\ba,r,\bh')$
            \STATE Calculate dual loss $L_{\mathrm{dual}}$ using \cref{eq:dual}
            \STATE Update $\xi$ by minimizing $L_{\mathrm{dual}}$
            \STATE Sample another mini-batch of transitions $(\bh,\ba,s,r,\bh')$ from $\cD$
            \STATE Set $\bar{\ba}'=[\argmax_{a_i'}\qrob_i(h_i',a_i';\theta^-)]_{i\in[N]}$
            \STATE Set $\Qtot{\cP,\mathrm{QMIX}}(\bh,\ba;\theta)=f_\theta((\qrob_i(h_i,a_i;\theta))_{i\in[N]},s)$
            \STATE Set $\Qtot{\cP,\mathrm{QMIX}}(\bh',\bar{\ba}';\theta^-)=f_{\theta^-}((\qrob_i(h_i',\bar{a}_i';\theta^-))_{i\in[N]},s')$
            \STATE Set $\ytar=r+\gamma(1-\rho)\eta_{\xi}(s,\ba)-\gamma[\eta_\xi(s,\ba)-\Qtot{\cP,\mathrm{QMIX}}(\bh',\bar{\ba}';\theta^-)]_+$
            \STATE Calculate TD loss $\ltd=(\Qtot{\cP,\mathrm{QMIX}}(\bh,\ba;\theta)-\ytar)^2$
            \STATE Update $\theta$ by minimizing $\ltd$
            \STATE Update $\theta^-=\theta$ with frequency $f$
        \ENDFOR
    \ENDFOR
\end{algorithmic}    
\end{algorithm}

\begin{algorithm}[htbp]
\caption{Robust QTRAN with TV uncertainty set}
\label{alg:qtran-tv}
\begin{algorithmic}[1]
    \STATE Input robustness parameter $\rho$, target network update frequency $f$ and $\varepsilon$
    \STATE Initialize replay buffer $\cD$
    \STATE Initialize $[\qrob_i]_{i\in[N]}$, $\Qtot{\cP,\mathrm{QTRAN}}$ and $\Vtot^{\cP,\mathrm{QTRAN}}$ with random parameters $\theta$
    \STATE Initialize dual network $\eta_\xi$ with random parameters $\xi$
    \STATE Initialize target parameters $\theta^- = \theta$
    \FOR{episode $e=1,\dotsc,E$}
        \STATE Observe initial state $s^0$ and observation $o_i^0=\sigma_i(s^0)$ for each agent $i$.
        \FOR{$t=1,\dotsc,T$}
            \STATE Each agent $i$ choose its action $a_i^t$ using $\varepsilon$-greedy policy.
            \STATE Take joint action $\ba^t$, observe the next state $s^{t+1}$, reward $r^t$ and observation $o_i^{t+1}=\sigma_i(s^{t+1})$ for each agent $i$
            \STATE Store transition $(\bh^t,\ba^t,s^t,r^t,\bh^{t+1})$ in replay buffer $\cD$
            \STATE Sample a mini-batch of transitions $(\bh,\ba,r,s,\bh',s')$
            \STATE Calculate dual loss $L_{\mathrm{dual}}$ using \cref{eq:dual}
            \STATE Update $\xi$ by minimizing $L_{\mathrm{dual}}$
            \STATE Sample another mini-batch of transitions $(\bh,\ba,s,r,\bh')$ from $\cD$
            \STATE Set $\bar{\ba}'=[\argmax_{a_i'}\qrob_i(h_i',a_i';\theta^-)]_{i\in[N]}$
            \STATE Set $\ytar=r+\gamma(1-\rho)\eta_{\xi}(s,\ba)-\gamma[\eta_\xi(s,\ba)-\Qtot{\cP,\mathrm{QTRAN}}(\bh',\bar{\ba}';\theta^-)]_+$
            \STATE Calculate TD loss $\ltd=(\Qtot{\cP,\mathrm{QTRAN}}(\bh,\ba;\theta)-\ytar)^2$
            \STATE Calculate $L_{\mathrm{opt}}$ using \cref{eq:lopt}
            \STATE Calculate $L_{\mathrm{nopt}}$ using \cref{eq:lnopt}
            \STATE Update $\theta$ by minimizing $L=\ltd+L_{\mathrm{opt}}+L_{\mathrm{nopt}}$
            \STATE Update $\theta^-=\theta$ with frequency $f$
        \ENDFOR
    \ENDFOR
\end{algorithmic}    
\end{algorithm}
\section{Experiment details}\label{appendix:experiments}
\subsection{Task Description}
We test our algorithms and baseline algorithms in \texttt{BuildingEnv} in \cite{yeh2023sustaingym}. This environment  considers the control of the heat flow in a multi-zone building so as to maintain a desired temperature setpoint. Building temperature simulation uses f\textit{irst-principled physics models}, to capture the real-world dynamics. The environmental model and reward functions can differ from three climate types and locations (San Diego, Tucson, New York), which jointly decide the climate. 
\paragraph{Episode.}In \texttt{BuildingEnv}, each episode runs for 1 day, with 5-minute time intervals. That is, the horizon length $H=288$, and the time interval length $\tau=5/60$ hours. We set the discount factor $\gamma\simeq 0.997$ by using $H$ as the effective horizon length $H=\frac{1}{1-\gamma}$.
\paragraph{State Space.}For a building with $N$ indoor zones, the state contains observable properties of the building environment at timestep $t$:
\begin{equation}
    s(t)=(T_1(t),\dotsc,T_N(t),T_E(t),T_G(t),Q^{GHI}(t),\bar{Q}^p(t)),
\end{equation}
where $T_i(t)$ denotes zone $i$'s temperature at time step $t$, $\bar{Q}^p(t)$ is the heat acquisition from occupants' activities, $Q^{GHI}(t)$ is the heat gain from the solar irradiance, and $T_G(t)$ and $T_E(t)$ denote the ground and outdoor environment temperature.
\paragraph{Observation Space.} For agent $i$, given the current state $s(t)$, its observation $o_i(t)$ is given by:
\begin{align}
    o_i(t)&=\sigma_i(s(t)) \nonumber\\
    &=(T_i(t),T_E(t),T_G(t),Q^{GHI}(t),\bar{Q}^p(t)).
\end{align}
That is, agent $i$ can observe the tempature at zone $i$, the heat acquisition from occupants' activities, the heat gain from the solar irradiance, and the ground and outdoor environment temperature.
\paragraph{Action Space.}At time $t$, agent $i$'s action is a scalar $a_i(t)\in[-1,1]$, which sets the controlled heating supplied to zone $i$. The joint action $\ba(t)=(a_1(t),\dotsc,a_N(t))$
\paragraph{Reward Function.}The objective is to reduce energy consumption while keeping the temperature within a given comfort range. Therefore, the reward function is a weighted average of these two goals:
\begin{equation}
    r(t)=-(1-\beta)||\ba(t)||_2-\beta||T^{\mathrm{target}}-T(t)||_2,
\end{equation}
where $T^{\mathrm{target}}=(T_1^{\mathrm{target}}(t),\dotsc,T_N^{\mathrm{target}})$ are the target temperatures and $T(t)=(T_1(t),\dotsc,T_N(t))$ are the actual zonal temperature, and $||\cdot||_2$ denote the $\ell_2$ norm. We use the same default hyperparameter $\beta$ across all experiments.
\paragraph{Environmental Uncertainty.}The environmental model and reward functions can differ from three climate types and locations (San Diego, Tucson, New York), which jointly decide the climate. Besides, \texttt{BuildingEnv} contains distribution shifts in the ambient outdoor temperature profile $T_E$ incurred by seasonal shifts.

\subsection{Experiments Setup.}
We implement distributionally robust algorithms with two types of uncertainty sets ($\rho$-contamination \cite{zhang2024distributionally}, TV-uncertainty\cite{panaganti-rfqi}) and three different value factorization methods (VDN \citep{sunehag2017value}, QMIX \citep{rashid2020monotonic}, QTRAN \citep{son2019qtran}). We also implement the GroupDR MARL algorithm from \citep{liu2025distributionally}, also with these three different value factorization methods. Each experiment is run independently with $5$ different random seeds. 
\paragraph{Hyperparameters.}Training lasts $600$ episodes with parameter updates every $2$ steps and target updates every $25$k steps. Replay buffer size is $10$k; batch size is $64$. We use $\varepsilon$-greedy exploration with $\varepsilon$ annealed from $1.0$ to $0.01$ over $120$k steps. Hidden layer size is $64$.
\paragraph{Environment configurations.}The environment configurations we use throughout the three experiments are shown in \Cref{tab:env-configs} where \texttt{Env\_1} is the training environment, and the other environments are numbered in order of how much they differ from \texttt{Env\_1}.

\begin{table}[ht]
\centering
\begin{tabular}{l l l}
\toprule
\textbf{Environment} & \textbf{Weather} & \textbf{Location} \\
\midrule
\texttt{Env\_1}
    & Hot\_Dry
    & Tucson \\
\texttt{Env\_2}
    & Hot\_Humid
    & Tampa \\
\texttt{Env\_3}
    & Very\_Hot\_Humid
    & Honolulu \\
\texttt{Env\_4}
    & Warm\_Dry
    & El Paso \\
\texttt{Env\_5}
    & Cool\_Marine
    & Seattle \\
\texttt{Env\_6}
    & Mixed\_Humid
    & New York \\
\bottomrule
\end{tabular}
\caption{Environment configurations for SustainGym \texttt{BuildingEnv}.}
\label{tab:env-configs}
\end{table}

\paragraph{Robust individual Q-networks.}Each agent employs a recurrent Q-network \citep{HausknechtS15}, encoding its local observation concatenated with the previous action (one-hot). Features pass through a fully connected layer with ReLU, followed by a single-layer LSTM (hidden size $64$). The final hidden state is projected into Q-values. We optimize with RMSprop (lr $=5\times 10^{-4}$).
\paragraph{Mixing networks (QMIX).}Following TorchRL \citep{bou2023torchrldatadrivendecisionmakinglibrary}, we use hypernetworks to generate state-dependent mixing weights while enforcing monotonicity. Per-agent Q-values are embedded, passed through ELU, and projected to a scalar joint Q. A state-conditioned bias is added via a two-layer MLP. Optimizer: RMSprop (lr $=5\times 10^{-4}$).
\paragraph{QTRAN Networks.}We implement joint action-value and state-value networks as in \citet{son2019qtran}. The joint action is encoded by concatenated one-hots, projected with ReLU, and optionally combined with agent features. The state-value network processes the global state and summed agent features in parallel, concatenates them, and outputs a scalar. Both use Adam (lr $=10^{-3}$).
\paragraph{Dual networks (TV uncertainty).} The global state and joint action (concatenated one-hots) are separately embedded via ReLU MLPs, concatenated, and passed through another ReLU layer before a final linear projection to a scalar. Optimizer: Adam (lr $=10^{-3}$).
\paragraph{GroupDR training.}GroupDR first fits a contextual-bandit-based worst-case reward estimator to estimate the worst-case reward, using data from \texttt{Env\_1} to \texttt{Env\_5} collected under a VDN-trained behavior policy. Using this estimator, we then train the individual Q-networks by randomly sampling episodes from \texttt{Env\_1} to \texttt{Env\_5}. (For fairness, \texttt{Env\_6} is never used for training by any method.)
\paragraph{Normalized Return.}Because the reward in \texttt{BuildingEnv} is negative, we normalize the returns to $[0,1]$ by the following transformation:
\begin{equation}
    \mathrm{Normalized\_return}=\frac{\mathrm{Return}+9000}{9000}.
\end{equation}
\paragraph{Experiment 1: climatic shifts.}During training, we train each algorithm in \texttt{Env\_1} by sampling episodes from Day 1 to Day 200. During evaluation, we evaluate each algorithm in \texttt{Env\_1} to \texttt{Env\_6}, by sampling $50$ episodes from Day 1 to Day 200 in each environment. This setup demonstrates distribution shift induced by climate differences.
\paragraph{Experiment 2: seasonal shifts.}During training, we train each algorithm in \texttt{Env\_1} by sampling episodes from Day 1 to Day 200. During evaluation, we evaluate each algorithm in \texttt{Env\_1}, by sampling $50$ episodes from Day 400 to Day 600 in each environment. This setup demonstrates distribution shift induced by seasonality within the same location.
\paragraph{Experiment 3: climatic and seasonal shifts.}During training, we train each algorithm in \texttt{Env\_1} by sampling episodes from Day 1 to Day 200. During evaluation, we evaluate each algorithm in \texttt{Env\_6}, by sampling $50$ episodes from Day 400 to Day 600 in each environment. This setup demonstrates the combined effects of climate and seasonal shifts.
\section{Disclosure of LLM Usage}
To improve clarity and readability, we used a large language model (LLM) to assist in polishing the writing. The LLM was only employed for language refinement (e.g., grammar, style, and conciseness) and was not involved in designing methods, experiments, or drawing conclusions.
\end{document}